\def\tsc#1{\csdef{#1}{\textsc{\lowercase{#1}}\xspace}}
\newtheorem{thm}{Theorem}
\newtheorem{rmk}{Remark}
\newtheorem{proof}{Proof}
\begin{document}

\let\WriteBookmarks\relax
\def\floatpagepagefraction{1}
\def\textpagefraction{.001}
\let\printorcid\relax

\makeatletter
\renewcommand{\fnum@figure}{Fig. \thefigure.\@gobble}
\makeatother

\shorttitle{}    

\shortauthors{}  


\title[mode = title]{
AEOS: \underline{A}ctive \underline{E}nvironment-aware \underline{O}ptimal \underline{S}canning Control for UAV LiDAR-Inertial Odometry in Complex Scenes}

\tnotetext[1]{This research is supported by NTUitive Gap Fund (NGF-2025-17-006) and the National Research Foundation, Singapore, under its Medium-Sized Center for Advanced Robotics Technology Innovation (CARTIN).}

\author[1]{Jianping Li}
\ead{jianping.li@ntu.edu.sg}
\credit{Conceptualization of this study, Methodology, Writing review, Original draft, Funding acquisition}
\affiliation[1]{organization={School of Electrical and Electronic Engineering},
            addressline={Nanyang Technological University}, 
           city={Singapore},
           postcode={639798}, 
            country={Singapore}}

\author[1]{Xinhang Xu}
\ead{xu0021ng@e.ntu.edu.sg}
\credit{Methodology, Experiment}

\author[1]{Zhongyuan Liu}
\ead{zliu051@e.ntu.edu.sg}
\credit{Methodology, Experiment}

\author[1]{Shenghai Yuan}
\ead{shyuan@ntu.edu.sg}
\credit{Methodology, Experiment}

\author[2]{Muqing Cao}
\ead{caom0006@e.ntu.edu.sg}
\credit{Methodology, Experiment}
\affiliation[2]{organization={Robotics Institute},
            addressline={Carnegie Mellon University}, 
           city={Pittsburgh},
           postcode={15213}, 
            country={USA}}

\author[1]{Lihua Xie}
\ead{elhxie@ntu.edu.sg}
\credit{Conceptualization of this study, Project administration, Funding acquisition}




\begin{abstract}
LiDAR-based 3D perception and localization on unmanned aerial vehicles (UAVs) are fundamentally limited by the narrow field of view (FoV) of compact LiDAR sensors and the payload constraints that preclude multi-sensor configurations. Traditional motorized scanning systems with fixed-speed rotations lack scene awareness and task-level adaptability, leading to degraded odometry and mapping performance in complex, occluded environments.
Inspired by the active sensing behavior of owls, we propose AEOS (Active Environment-aware Optimal Scanning), a biologically inspired and computationally efficient framework for adaptive LiDAR control in UAV-based LiDAR-Inertial Odometry (LIO). AEOS combines model predictive control (MPC) and reinforcement learning (RL) in a hybrid architecture: an analytical uncertainty model predicts future pose observability for exploitation, while a lightweight neural network learns an implicit cost map from panoramic depth representations to guide exploration.
To support scalable training and generalization, we develop a point cloud-based simulation environment with real-world LiDAR maps across diverse scenes, enabling sim-to-real transfer. Extensive experiments in both simulation and real-world environments demonstrate that AEOS significantly improves odometry accuracy compared to fixed-rate, optimization-only, and fully learned baselines, while maintaining real-time performance under onboard computational constraints. The project page can be found at \url{https://kafeiyin00.github.io/AEOS/}.

\end{abstract}

\begin{keywords}
LiDAR\sep Sensor Control\sep UAV\sep LiDAR-Inertial Odometry\sep Reinforcement Learning
\end{keywords}

\maketitle

\begin{figure*}[]
\includegraphics[width=\textwidth]{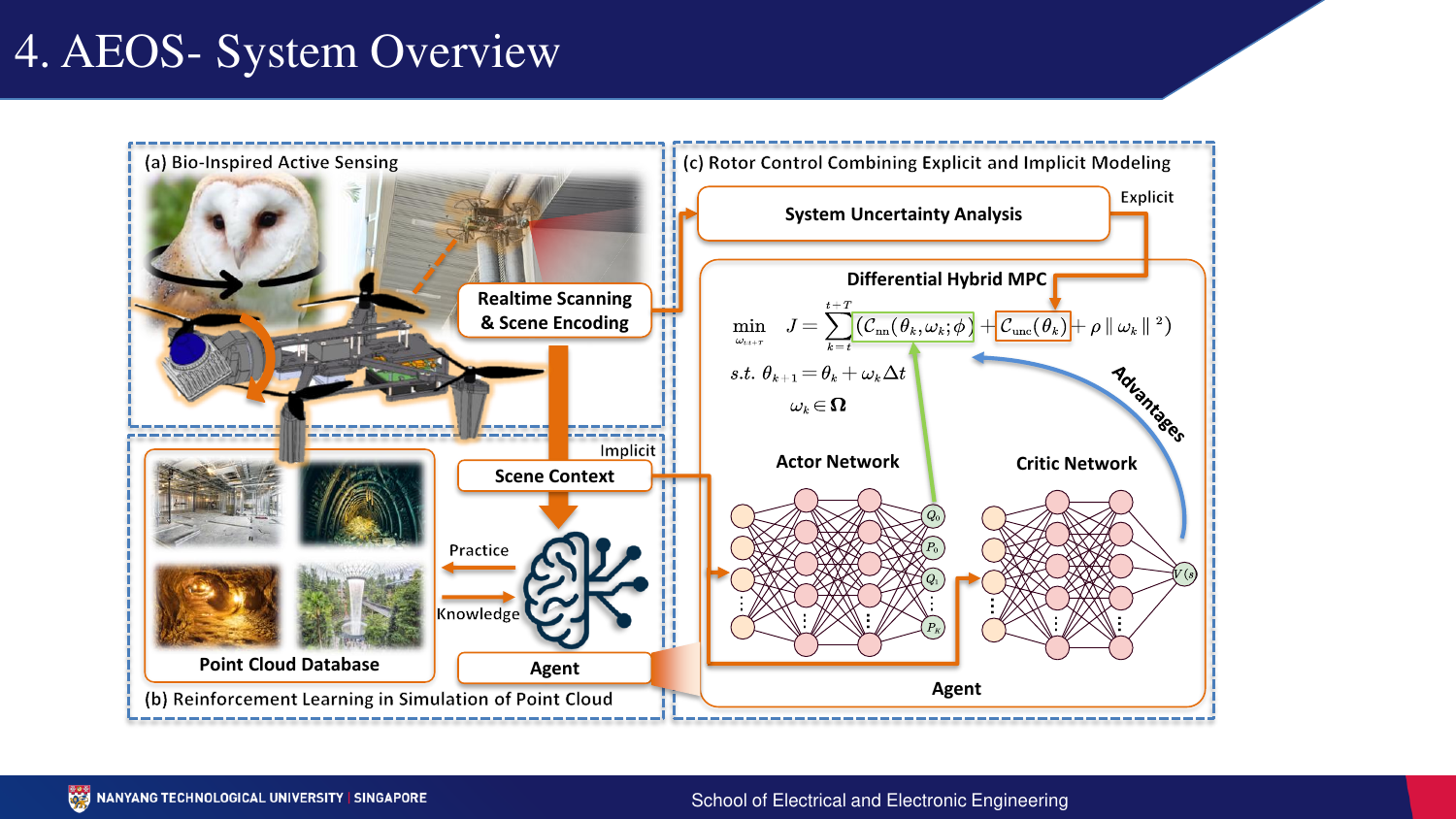}
    \captionof{figure}{Overview of the proposed Active Environment-aware Optimal Scanning Control (AEOS). (a) We proposed a bio-inspired active panoramic sensing UAV system for complex scenes. (b) The active sensing agent practices and learns knowledge in a point cloud-based simulation environment via reinforcement learning. (c) The proposed agent is a differentiable hybrid Model Predictive Control (MPC) considering implicit information from scene context and explicit information from uncertainty analysis.}
    \label{fig:abstract}
\end{figure*}

\section{Introduction}

Autonomous 3D perception in unstructured and occluded environments, such as underground infrastructure \citep{ebadi2023present, li2023whu}, construction sites \citep{chen2022robot, li2024hcto}, or forested areas \citep{li20193d,proudman2022towards}, is a critical enabler for robotic applications in inspection, mapping, monitoring, and search and rescue \citep{wang2023development}. Among various sensing modalities, LiDAR has emerged as a reliable choice for high-precision 3D perception in these complex scenarios. However, when deployed on unmanned aerial vehicles (UAVs), LiDAR-based sensing systems face two fundamental constraints: the inherently narrow field of view (FoV) of compact LiDAR sensors and the strict payload and power limits that preclude the use of multiple LiDAR sensors \citep{chen2023self,diels2022optimal}. These limitations often lead to severe degradation in the accuracy and completeness of LiDAR-inertial odometry (LIO), particularly in complex environments with occlusions or sparse features \citep{chen2024design, kaul2016continuous}.

Recent advances in LiDAR-based perception have explored a progression of strategies to overcome field-of-view (FoV) limitations and improve odometry performance in complex environments. Early efforts focused on motorized scanning mechanisms, such as rotating LiDAR heads or actuated platforms~\citep{kaul2016continuous, alismail2015automatic}, which passively increase spatial coverage but lack the ability to adapt to scene complexity or task-specific demands~\citep{chen2024design, cui2024alphalidar}. To improve adaptability, optimization-based active control methods, typically formulated as model predictive control (MPC) problems, have been proposed to modulate scanning parameters based on estimated uncertainty or information gain~\citep{li2025ua, leung2006planning}. While offering more flexible viewpoint selection, these methods often require handcrafted cost functions and are sensitive to local minima, limiting their robustness in diverse environments.
More recently, learning-based control strategies, particularly those based on deep reinforcement learning (RL), have shown promise in enabling adaptive, scene-aware scanning behaviors learned directly from interaction data~\citep{romero2024actor, morgan2021model}. However, such approaches remain computationally intensive and are difficult to deploy on resource-constrained UAVs due to their reliance on high-dimensional point cloud inputs and large neural networks~\citep{zhang2025armor, ling2023efficacy, cao2024learning}. Additionally, the lack of interpretability in end-to-end learned controllers makes it difficult to enforce safety constraints or ensure generalization across unseen scenes.
Together, these limitations underscore a fundamental challenge: how to design an active LiDAR control framework that is adaptive, computationally lightweight, and physically interpretable, while maintaining strong generalization across diverse and occluded environments.

To overcome these limitations, we introduce Active Environment-aware Optimal Scanning (AEOS), a biologically inspired and hybrid control framework for active LiDAR perception on UAV platforms. AEOS takes inspiration from the adaptive gaze mechanisms observed in owls, which continuously reorient their head to maximize visual awareness under physical constraints. Analogously, AEOS dynamically modulates the LiDAR’s scanning direction and velocity in response to environmental structure and motion-induced uncertainty. 
A key insight in AEOS is the fusion of explicit and implicit modeling to balance exploitation and exploration—two competing objectives in active SLAM. Exploitation refers to focusing sensing resources on task-relevant or feature-rich regions to maximize estimation accuracy, while exploration aims to acquire information about under-observed or uncertain areas to reduce long-term uncertainty \citep{wang2023active}. In AEOS, we use an analytical model to predict pose uncertainty from system dynamics and sensor coverage, which guides the MPC to optimize control for exploitation in a task-aware and interpretable manner. In parallel, a lightweight neural network learns a cost map from local point clouds to capture high-dimensional scene features that support exploration and are hard to model analytically. This hybrid design significantly reduces computational burden compared to pure end-to-end learning, while enhancing generalization across diverse scenes.

To support robust and scalable training, we develop a high-fidelity, point cloud-based simulation platform with real-to-sim-to-real transfer capability across diverse complex scenes: urban, tunnel, and forest scenarios. AEOS is trained and validated within this environment and deployed on a resource-constrained UAV platform with a lightweight motorized LiDAR. 
In summary, we make the following contributions:

(1) We develop a compact, biologically inspired motorized LiDAR control strategy that endows UAVs with panoramic and self-adaptive scanning capabilities, enabling task- and context-aware FoV modulation without exceeding onboard payload or power constraints.

(2) We propose a hybrid RL-MPC framework that fuses explicit and implicit modeling to balance exploitation and exploration in active LiDAR control. An analytical uncertainty model guides MPC to focus sensing actions on rich-feature regions (exploitation), while a neural network implicitly learns a cost map from local point clouds to promote coverage of uncertain areas (exploration). This architecture preserves the interpretability and task-awareness of model-based control while leveraging the adaptability and high-dimensional feature learning of reinforcement learning, enabling improved generalization and lower computational cost compared to fully end-to-end policies.

(3) We develop a high-fidelity, point cloud-based RL simulation environment tailored for active LiDAR control, supporting physics-consistent UAV dynamics, sensor modeling, and diverse real-world scenarios. This environment enables efficient training, evaluation, and sim-to-real transfer of adaptive LiDAR scanning policies.

(4) We validate the proposed system through extensive experiments in both simulated and real-world environments, demonstrating consistent improvements in LiDAR-inertial odometry accuracy, mapping completeness, and robustness over fixed-speed and purely optimization-based baselines.

The remainder of this paper is organized as follows. Section~\ref{sec:related_work} reviews related work in motorized LiDAR systems, active sensor control, and data-driven scanning strategies. Section~\ref{sec:hardware} presents the hardware design for AEOS-Drone. Section~\ref{sec:method} details the proposed hybrid RL-MPC control framework. Section~\ref{sec:simulation} introduces the point cloud-based simulation platform and training pipeline. Section~\ref{sec:experiments} reports experimental results in both simulated and real-world scenarios. Finally, Section~\ref{sec:conclusion} concludes the paper and discusses future directions.

\section{Related Works} \label{sec:related_work}
LiDAR-based active perception and positioning have been developed along a clear trajectory in the past decade: from mechanically expanding sensor coverage \citep{alismail2015automatic}, to introducing task-aware active control \citep{shi2023real,cui2024alphalidar,li2025ua}, and ultimately toward data-driven strategies that offer greater adaptability \citep{cao2023trust,bartolomei2021semantic}. Early efforts focused on motorized scanning systems to overcome the limited field of view (FoV) inherent in single LiDAR sensors. These solutions improved coverage but remained largely passive and uniform in their scanning behavior \citep{kaul2016continuous, alismail2015automatic}. To improve task-level performance, active sensor control approaches are developed to adapt sensor orientation based on task objectives or environmental cues. However, these methods often rely on hand-tuned models or experience, and are prone to suboptimal local decisions \citep{li2025ua,chen2024design}. More recently, the emergence of data-driven control strategies, particularly those based on reinforcement learning (RL), has offered the promise of adaptive, scene-aware perception, though often at the cost of increased complexity and reduced interpretability \citep{cao2023trust,bartolomei2021semantic,romero2024actor}.
In the following, we review this progression in detail, highlighting the capabilities and limitations that motivate our proposed active environment-aware optimal scanning control for UAV system.

\subsection{Motorized Scanning System}
To overcome the inherent field-of-view (FoV) limitations of fixed-mounted LiDAR sensors, motorized scanning systems have been introduced to enable dynamic viewpoint adjustment and panoramic sensing. Early implementations, primarily on ground-based platforms, leveraged multi-degree-of-freedom mechanisms such as rotating heads or actuated gimbals to perform dense 3D scanning and improve SLAM robustness in occluded or large-scale environments~\citep{alismail2015automatic, zhang2014loam}. These systems operated under relaxed payload and power constraints, making high-frequency, omnidirectional scanning feasible. More recent efforts have extended motorized LiDAR to mobile platforms with tighter constraints, such as legged robots~\citep{li2025limo}.

However, transferring these designs to UAVs introduces substantial challenges due to strict size, weight, and power limitations. Most aerial platforms rely on passive or mechanically constrained solutions to improve LiDAR coverage, such as free-spinning mounts~\citep{chen2023self} or MEMS-based optical deflection units~\citep{chen2024design}, that offer limited or no control over scanning behavior. Critically, these systems lack the ability to actively direct sensing based on scene geometry, motion state, or specific task. 

Moreover, current designs are predominantly engineered for geometric completeness, without drawing from bio-inspired principles of active perception. In contrast, many animals, such as owls, dynamically reorient their head and gaze to resolve uncertainty and focus on salient regions, thereby achieving efficient, context-driven sensing under tight physical constraints. These biological strategies, emphasizing adaptive, efficient, and purposeful sensing, have rarely been embodied in LiDAR control for UAVs.

This lack of scene-adaptive and goal-directed scanning highlights a critical gap: a lightweight, computationally efficient, and behaviorally intelligent LiDAR control system that can adapt its sensing strategy on-the-fly remains largely unexplored in the context of autonomous aerial robotics.

\subsection{Active Sensor Control for Sensing System}
While motorized scanning systems extend LiDAR FoV via mechanical means, they typically lack scene-awareness and task adaptability, especially in maintaining reliable SLAM under challenging conditions~\citep{chen2024lidar, ramezani2022wildcat}. To bridge this gap, active sensor control strategies have been explored, adjusting sensor poses or orientations based on real-time perception feedback.

In visual SLAM, a rich body of work has investigated view planning guided by information-theoretic metrics to achieve better SLAM accuracy. For example, a continuous view planning method is proposed based on Fisher information to select camera viewpoints that maximize feature richness and localization accuracy in outdoor environments \citep{wang2023active}. iRotate introduces a rotating camera approach for omnidirectional robots, which decoupled platform motion from camera actuation, enabling active orientation control to improve map quality and reduce travel distance \citep{bonetto2022irotate}. Additionally, ExplORB-SLAM \citep{placed2022explorb} uses a pose-graph topology to compute utility efficiently for active viewpoint selection, demonstrating improved uncertainty reduction during exploration.

Similar principles have been extended to LiDAR-based systems, where active control is used to improve odometry or mapping quality by explicitly adjusting scan direction, angular velocity, or sensor pose. According to the recent survey by ~\citet{placed2023survey}, LiDAR-based active SLAM methods share the common goal of maximizing long-term information gain or minimizing pose uncertainty. These methods can be categorized based on how utility is defined, ranging from information-theoretic criteria (e.g., entropy reduction, mutual information) to task-aware objectives such as localization robustness \citep{stachniss2004exploration}, map coverage \citep{bai2024graph}, or motion efficiency \citep{li2020high}. A common theme is the tight integration between perception objectives and planning under system constraints, such as sensor actuation limits and real-time onboard computation \citep{kantaros2019asymptotically}. For example, \citet{mihalik2022new} proposed an entropy-based 2D grid LiDAR SLAM strategy that dynamically reorients the scanner to reduce map uncertainty. Similarly, \citet{leung2006planning} introduces SPLAM, one of the earliest MPC-based LiDAR active SLAM frameworks, which integrates pose uncertainty into a receding-horizon controller for planning informative trajectories. More recently, efforts like UA-MPC~\citep{li2025ua} have shown that incorporating ray-traced LiDAR observability models into MPC frameworks can enable uncertainty-aware scan speed modulation on handheld devices, improving LIO robustness. Yet, these optimization-based approaches still depend on carefully tuned cost terms and exhibit limited task adaptability.
These methods often assume access to a global belief or map structure and rely heavily on handcrafted utility functions, which limits their generalization to complex scenes.

\subsection{Data-Driven Control using Reinforcement Learning}

To address the limitations of handcrafted and optimization-based controllers, recent research has increasingly explored data-driven control strategies, particularly those based on deep reinforcement learning (RL), for active sensing and motion planning. RL-based policies are capable of learning scene-aware and adaptive behaviors directly from interaction data, making them particularly suitable for long-horizon tasks where system dynamics or perception performance are difficult to model explicitly~\citep{romero2024actor, morgan2021model}. In particular, heuristic formulations based on information gain, while commonly used in active SLAM, often lead to locally optimal decisions that fail to account for long-term positioning uncertainty reduction.

In the context of active perception, several works have employed policy gradient methods or actor–critic frameworks to select sensor poses or motion primitives based on information gain or task-related reward~\citep{chaplot2020learning}. Others have explored hierarchical RL or representation learning from point clouds to support scalable policy training~\citep{xu2025flying}. Hybrid approaches that combine RL with structured controllers such as model predictive control (MPC) have also gained traction. For instance, learned components can be used to tune MPC cost weights~\citep{cao2024learning} or to generate trajectory parameters within an optimization loop~\citep{romero2024actor}.

Despite these advances, direct application of RL to UAV-based LiDAR control remains highly challenging. End-to-end RL policies trained on raw point clouds or occupancy maps tend to require deep networks with high-dimensional input encoders and recurrent modules \citep{ling2023efficacy, zhang2025armor}, leading to an excessive computational burden that exceeds the capabilities of lightweight onboard processors typically available on UAVs. In addition, such models often exhibit poor sample efficiency and limited generalization across environments, particularly in safety-critical domains like aerial robotics where failure is unacceptable \citep{xu2025navrl}. Moreover, the lack of interpretability in fully learned control pipelines makes it difficult to diagnose failures or enforce constraints.

Our work builds upon these insights by proposing a hybrid RL-MPC control framework that integrates the structure of analytical uncertainty modeling with the flexibility of learned cost maps. This design enables efficient, interpretable, and adaptive control for motorized LiDAR in UAV platforms, addressing key limitations observed in both purely model-based and purely learning-based prior works.

\section{Hardware Design of AEOS-Drone} \label{sec:hardware}

\begin{figure}[]
    \centering
    \includegraphics[width=0.9\linewidth]{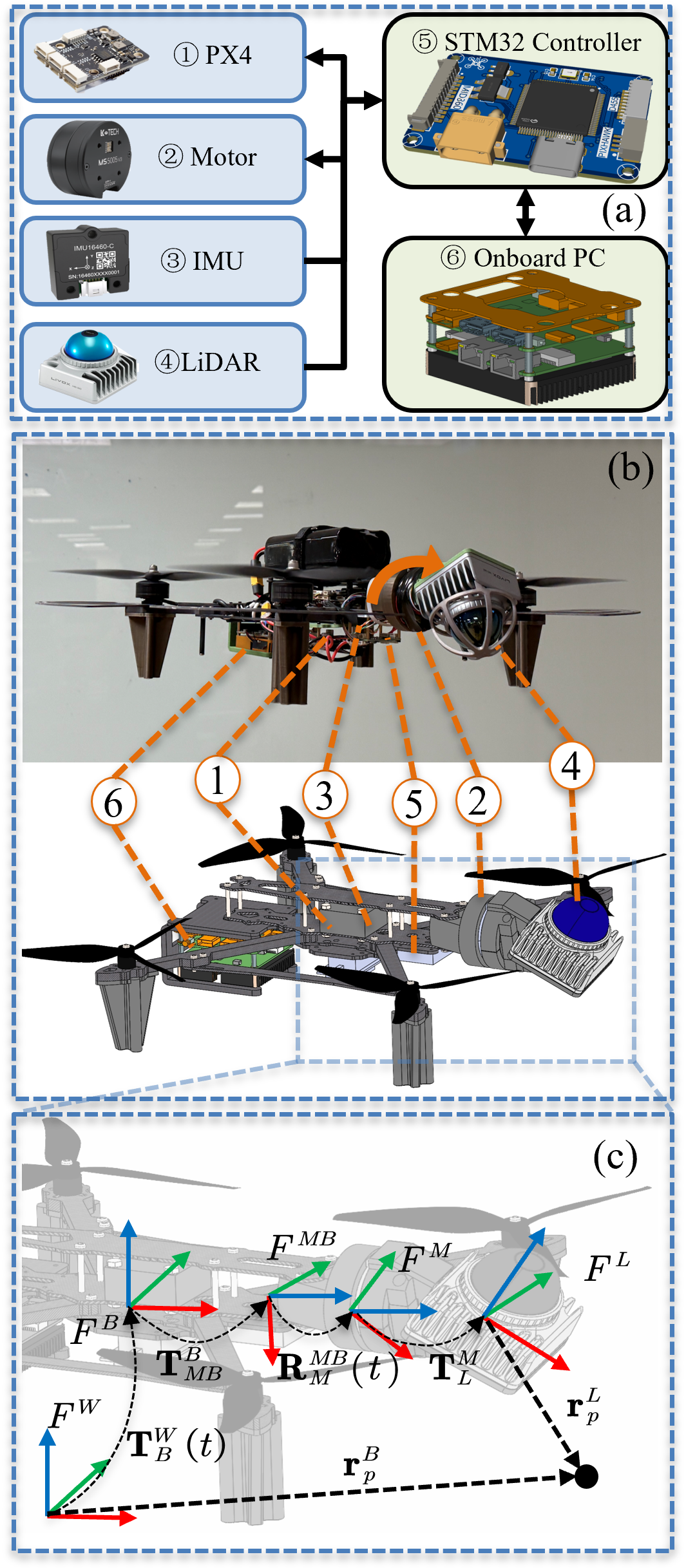}
    \caption{Hardware design of the proposed Active Environment-aware Optimal Scanning (AEOS)-Drone. (a) Sensor synchronization and embedded system architecture. (b) Bio-inspired UAV hardware design and related coordination systems. (c) Coordination systems related to the AEOS-Drone.}
    \label{fig:hardware}
\end{figure}

\subsection{Sensor Synchronization and Embedded System Architecture}
To enable active LiDAR control and accurate sensor fusion, the AEOS-Drone integrates a strict electric time-synchronized multi-sensor system, as shown in Fig. \ref{fig:hardware} (a). The electrical architecture consists of a PX4 flight controller, a motor, an IMU, a LiDAR sensor, a STM32 microcontroller, and an onboard computing unit.

The STM32 microcontroller serves as the central timing and coordination hub. It collects encoder feedback from the rotor, IMU and LiDAR data streams, and synchronizes them with the PX4 state estimator through a unified timestamp protocol. This ensures that all sensor measurements are temporally aligned for consistent state estimation and control.

Data are transmitted to the onboard PC via USB. The onboard PC executes the AEOS control policy, receiving synchronized sensor observations, predicting motor commands, and modulating the LiDAR’s scanning behavior in real time. This architecture enables low-latency feedback, reliable data fusion, and precise control under the computational constraints of aerial platforms.

\subsection{ Bio-Inspired UAV Mechanical Design}

The mechanical design of the AEOS-Drone, shown in Fig. \ref{fig:hardware} (b), draws inspiration from the active gaze control of owls, which adjust their head orientation to focus on salient areas without altering body posture. Similarly, the AEOS platform employs a front-mounted motorized LiDAR module that enables independent adjustment of the sensor’s scanning direction.

The LiDAR unit is mounted on a compact, high-precision rotor, allowing it to rotate about its axis to achieve panoramic perception. This rotation is controlled independently of the UAV's flight dynamics, enabling scene-aware scanning modulation without disturbing the UAV's stability. The design supports variable-speed bidirectional actuation, making it possible to prioritize task-relevant or uncertain regions in real time. The UAV frame is engineered to be lightweight and well-balanced. This bio-inspired mechanical configuration allows the AEOS system to achieve adaptive sensing while preserving flight efficiency and platform robustness in complex environments.

\subsection{ Coordination Systems and Calibration}
To ensure accurate transformation between sensor data and global pose estimates, AEOS defines a set of coordinate frames, as illustrated in Fig.~\ref{fig:hardware} (c). The system includes the world frame ($F^W$), UAV body frame ($F^B$), motor base frame ($F^{MB}$, fixed with UAV body), rotor frame ($F^M$, rotating along Z axis of motor base frame), and LiDAR sensor frame ($F^L$). These frames are linked via a sequence of time-varying and static transformations to describe the spatial configuration of the LiDAR sensor relative to the UAV and the world. The transformation $\mathbf{T}^A_B = [\mathbf{R}^A_B, \mathbf{r}^A_B]$ denotes the rigid-body pose of frame $B$ with respect to frame $A$, where $\mathbf{R}^A_B \in \mathbb{SO}(3)$ is the rotation matrix and $\mathbf{r}^A_B \in \mathbb{R}^3$ is the translation vector. We denote a point observed by the LiDAR in the sensor frame $\mathbf{r}^L_p \in \mathbb{R}^3$, which is represented as $\mathbf{r}^{W}_p \in \mathbb{R}^3$ in the world frame. The transformation between the $\mathbf{r}^{L}_p$ and $\mathbf{r}^{W}_p$ is written as follow:

\begin{equation}
    \mathbf{r}^{W}_p = \mathbf{R}^W_B(t)
    \left( \mathbf{R}^B_{MB}
    \left(\mathbf{R}^{B}_M(t)\left(\mathbf{R}^M_L \mathbf{r}^{L}_p + \mathbf{r}^M_L \right) \right) + \mathbf{r}^B_{MB}
    \right)
    +\mathbf{r}^W_B(t), \label{eq:coordinate_projection}
\end{equation}

Precise extrinsic calibration between these frames is critical for reliable pose estimation and control. Static transformations, such as between the LiDAR and motor frames ($\mathbf{T}_M^L$), and between the motor base and UAV body ($\mathbf{T}_B^{MB}$), are obtained through one-time mechanical alignment and offline calibration. More specifically, the LiDAR-motor calibration for $\mathbf{T}_M^L$ is calibrated using Limo-calib \citep{li2025limo}. Then, UAV body-motor calibration for $\mathbf{T}^B_{MB}$ is conducted using LI-Calib \citep{lv2020targetless}. The time-varying rotation $\mathbf{R}_{MB}^M(t)$ is continuously updated from the rotor encoder via the STM32 controller, and fed into the control and odometry modules in real time. The LIO algorithm will estimate the body transformation $\mathbf{T}^W_B (t)$ with respect to the world frame. In the following section, we will elaborate on how to control the time-varying rotation $\mathbf{R}_{MB}^M(t)$ and estimate the system states.

\section{Our Approach: AEOS}
\label{sec:method}


\subsection{Problem Formulation and System Overview}
We consider the problem of active LiDAR viewpoint control on a UAV platform, where the objective is to dynamically modulate the LiDAR scanning direction to improve localization and mapping performance in complex environments. The goal of the control system is two-fold: (1) to actively direct the LiDAR scanning pattern to reduce future pose uncertainty, thereby improving LiDAR-inertial odometry (LIO); and (2) to explore unobserved or high-uncertainty regions to improve robustness. These two objectives naturally lead to a tradeoff between \textit{exploitation} and \textit{exploration}, which we address through a hybrid model-based and data-driven control framework.

\subsubsection{State and Dynamics}
We define the system state at time $t$ as the LiDAR scanning angle $\theta_t \in \mathbb{R}$, representing the yaw orientation of the motorized LiDAR relative to the UAV body frame. The control input is the angular velocity $\omega_t \in \mathbb{R}$, which determines how fast the LiDAR rotates at each time step.
We adopt a first-order discrete-time dynamical model, consistent with real-time embedded control constraints and encoder-driven actuation:
\begin{equation}
\theta_{t+1} = \theta_t + \omega_t \cdot \Delta t,
\end{equation}
where $\Delta t$ is the control time interval. This model assumes constant angular velocity within each step and neglects higher-order rotational dynamics for computational simplicity. The full control trajectory $\omega_{t:t+T}$ is optimized over a fixed horizon $T$ within the MPC framework described in the following section.

\subsubsection{Observation and Perception Feedback}
At each control step, the AEOS system receives an observation $o_t$ composed of both geometric and estimation-related information derived from onboard sensors. Specifically, the observation includes:

\begin{itemize}
    \item A locally accumulated LiDAR point cloud $P_t$, constructed over a sliding time window using the current and recent scans, projected into the world frame using time-synchronized pose estimates and encoder-reported scan angles;
    \item An estimated pose uncertainty matrix $\Sigma_t$, obtained from the LiDAR-inertial odometry (LIO)(e.g., EKF), summarizing the current localization confidence.
\end{itemize}
Formally, the observation at time $t$ is defined as:
\begin{equation}
    o_t = \left[ \text{Enc}(P_t), \, \text{Vec}(\Sigma_t) \right], \label{eq:observation}
\end{equation}
where $\text{Enc}(P_t)$ denotes a geometric encoding of the local point cloud, and $\text{Vec}(\Sigma_t)$ is the vectorized form of the pose covariance matrix used for uncertainty reasoning.The local point cloud $P_t$ reflects the spatial structure and surface geometry of the scene, capturing occlusions, feature density, and visibility distribution, which influence the potential for future pose refinement. In parallel, the pose uncertainty $\Sigma_t$ serves as an exploitation signal, guiding the controller to direct scans toward regions that improve localization observability.

\subsubsection{Unified Panoramic Representation of Local Point Clouds for Downstream Modules} \label{sec:panoramic_representation}

To enable efficient integration of perception into both analytical and learned control objectives, we construct a unified representation of the local 3D scene using a panoramic depth map $\mathcal{D}_t \in \mathbb{R}^{H \times W}$. This map is obtained by projecting the local LiDAR point cloud $P_t$ onto a discretized spherical coordinate system centered at the UAV body frame as shown in Fig. \ref{fig:panoramic_depth}.

\begin{figure}
    \centering
    \includegraphics[width=\linewidth]{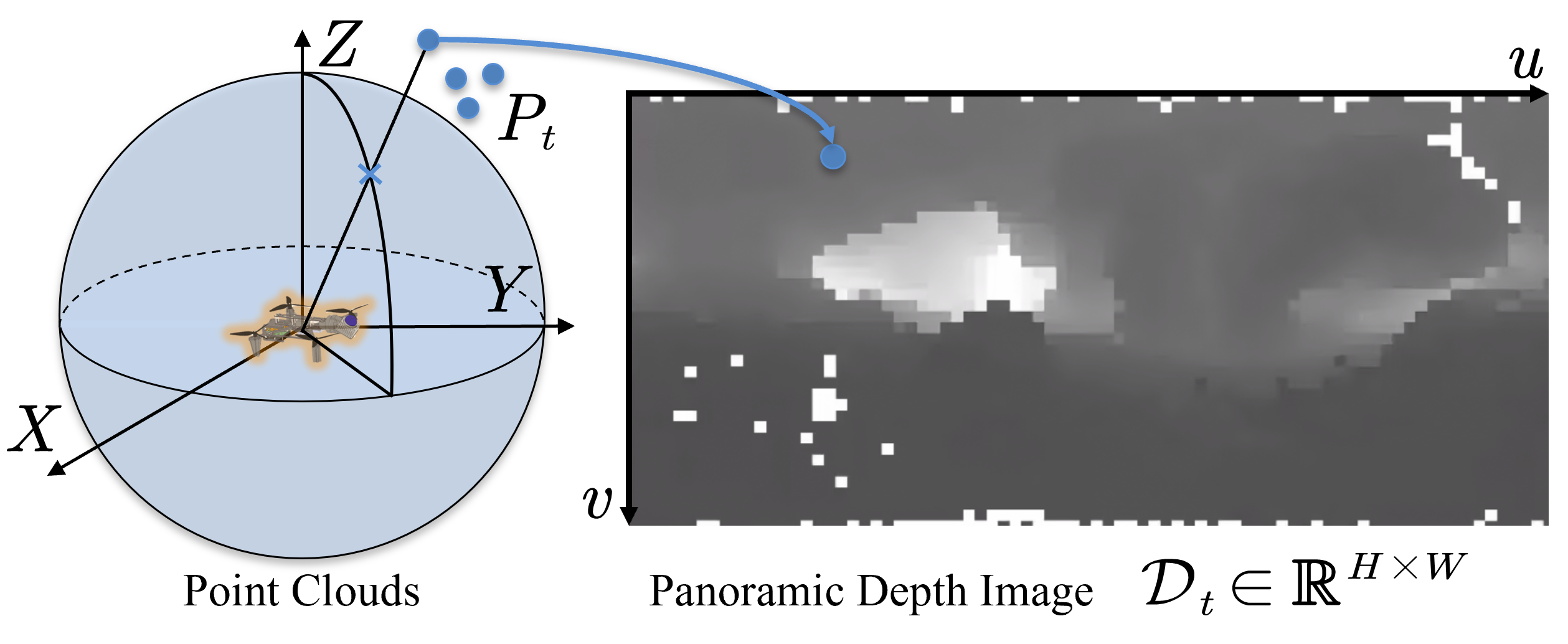}
    \caption{Unified Panoramic Representation of Local Point Clouds for Downstream Modules.}
    \label{fig:panoramic_depth}
\end{figure}

Each pixel in $\mathcal{D}_t$ encodes the shortest range observation within its azimuth-elevation bin, capturing occlusion patterns and surface structures relevant for planning. Formally, for each point $\mathbf{p} = [x, y, z]^\top \in P_t$, its image coordinates $(u, v)$ are given by:

\begin{equation}
\left[
\begin{array}{c}
u \\
v \\
\end{array}
\right] = \left[
\begin{array}{c}
2\pi + \frac{\arctan2(y,x)}{2\pi} \cdot W \\
\pi - \frac{\arcsin\left(z / \|\mathbf{p}\|\right)}{\pi} \cdot H \\
\end{array}
\right],
\label{eq:panomap_projection}
\end{equation}
where $W$ and $H$ are the width and height of the map. Importantly, $W$ and $H$ are configured separately for different downstream modules: higher resolution is used for uncertainty modeling (Section~\ref{sec:cost_map_exploitation}), while a coarser resolution suffices for neural cost learning (Section~\ref{sec:cost_map_exploration}) to reduce computational burden.
This design promotes representational consistency across modules and reduces redundant computation during real-time onboard execution.

The analytical uncertainty model (detailed in Section~\ref{sec:cost_map_exploitation}) predicts the impact of future viewpoints on positioning uncertainty, while a learned cost map (Section~\ref{sec:cost_map_exploration}) interprets $\mathcal{D}_t$ to favor exploration toward informative or under-observed areas. This perception-feedback loop enables AEOS to perform goal-driven and adaptive scanning in real time. The control objective is to select a scan trajectory that reduces future pose uncertainty while also encouraging exploration of informative or under-observed regions.
This problem structure naturally motivates a hybrid control solution: we combine an analytical model that explicitly predicts future uncertainty for exploitation, with a learned cost model that implicitly encodes exploration value from the point cloud. This hybrid design forms the basis of our decision-making architecture, detailed in the following section.

\subsection{Differential Hybrid RL-MPC Architecture}

\subsubsection{Decision Objective}
The control system selects a sequence of LiDAR scanning actions $\{\omega_k\}_{k=t}^{t+T}$ over a finite horizon $T$ to minimize a multi-objective cost that balances exploitation, exploration, and control smoothness. The optimization problem is formulated as:

\begin{equation}
\min_{\omega_{t:t+T}} \quad J = \sum_{k=t}^{t+T} \left( \mathcal{C}_{\text{unc}}(\theta_k) + \mathcal{C}_{\text{nn}}(\theta_k,\omega_k; \phi) + \rho \|\omega_k\|^2 \right),
\label{eq:mpc_objective}
\end{equation}
where $\mathcal{C}_{\text{unc}}(\theta_k)$ denotes an analytically computed uncertainty cost that quantifies expected localization inaccuracy, typically measured by the trace of the predicted pose covariance under scan direction $\theta_k$. The term $\mathcal{C}_{\text{nn}}(\theta_k,\omega_k; \phi)$ is a learned cost map generated by a neural network, conditioned on the current observation $o_t$, which captures spatial and semantic cues from the local point cloud and promotes exploratory behavior in under-observed regions. The final term, $\rho \|\omega_k\|^2$, penalizes high angular velocities to encourage smooth and energy-efficient motion.

The total cost $J$ is minimized by a model predictive control (MPC) solver that generates the optimal control sequence given current state estimates and physical constraints. Notably, the neural network does not produce actions directly, but instead modulates the cost landscape through $\mathcal{C}_{\text{nn}}$, allowing scene-aware adaptation while preserving the interpretability and task-awareness of model-based planning. The whole differential hybrid RL-MPC framework is illustrated in Fig. \ref{fig:abstract} (c).

\subsubsection{Model Predictive Control with Reinforcement Learning}

We model the adaptive viewpoint planning task as a Markov Decision Process (MDP), defined by the tuple $(\mathcal{S}, \mathcal{A}, \mathcal{P}, \mathcal{R}, \gamma)$, where $\mathcal{S}$ is the space of observations $o_t = [\text{Enc}(P_t)$, $\mathcal{A}$ is the space of LiDAR control actions $\omega_t$, $\mathcal{P}$ denotes the system dynamics $\theta_{t+1} = \theta_t + \omega_t \cdot \Delta t$, $\mathcal{R}$ is a task-dependent reward, and $\gamma$ is the discount factor. Unlike standard RL that directly maps observations to actions, AEOS adopts a structured formulation where the policy outputs a cost function $\mathcal{C}_{\text{nn}}(\theta,\omega; \phi)$ instead of an action. This learned cost modulates the MPC objective, guiding trajectory selection while preserving interpretability and ensuring constraint satisfaction.

At each timestep, the neural cost model parameterized by $\phi$ is queried based on $o_t$ and contributes to the composite cost in Eq.~\eqref{eq:mpc_objective}. The MPC solver then computes the optimal angular velocity $\omega_t$ by minimizing the total cost over the prediction horizon.
The neural cost function is trained via reinforcement learning to improve long-term task performance. Specifically, the training objective minimizes the expected critic value:

\begin{equation}
\mathcal{L}(\phi) = \mathbb{E}_{s \sim \mathcal{D}} \left[ Q(s, \omega^*(\phi)) \right],
\end{equation}
where $\omega^*(\phi)$ is the MPC-generated action given the current cost landscape. This actor-critic formulation enables gradient-based training of the policy through the differentiable MPC layer, allowing the learned cost to reflect both short-term constraints and long-term reward signals.

\subsubsection{Model Predictive Control Layer}

In AEOS, the MPC layer solves a constrained optimization problem over a finite horizon $T$ to generate the LiDAR control signal $\omega_t$. At each timestep, the objective is to minimize the total cost defined in Eq.~\eqref{eq:mpc_objective}:

\begin{equation}
\omega^*(\phi) = \arg\min_{\omega_{t:t+T}} \sum_{k=t}^{t+T} \left[ \mathcal{C}_{\text{unc}}(\theta_k) + \mathcal{C}_{\text{nn}}(\theta_k; \phi) + \rho \|\omega_k\|^2 \right],
\label{eq:mpc_solve}
\end{equation}
subject to the dynamics $\theta_{k+1} = \theta_k + \omega_k \cdot \Delta t$.
Under the assumption that the combined cost $\mathcal{C}(\theta_k; \phi) = \mathcal{C}_{\text{unc}}(\theta_k) + \mathcal{C}_{\text{nn}}(\theta_k,\omega_k; \phi)$ is differentiable with respect to $\theta_k$, the optimal control sequence $\boldsymbol{\omega}^*(\phi)$ satisfies the first-order optimality condition:

\begin{equation}
\boldsymbol{\omega}^*(\phi) = -\frac{1}{2\rho} A^\top \nabla_{\boldsymbol{\theta}} \mathcal{C}(\boldsymbol{\theta}; \phi),
\label{eq:mpc_analytic_solution}
\end{equation}
where $\boldsymbol{\omega} = [\omega_t, \omega_{t+1}, \dots, \omega_{t+T}]^\top \in \mathbb{R}^{T+1}$ is the control input sequence over the planning horizon;
$\boldsymbol{\theta} = [\theta_{t+1}, \theta_{t+2}, \dots, \theta_{t+T+1}]^\top \in \mathbb{R}^{T+1}$ is the predicted future LiDAR scan angles;
$A \in \mathbb{R}^{(T+1) \times (T+1)}$ is a strictly lower-triangular integration matrix that maps control inputs to angle increments:

\begin{equation}
A = \Delta t \cdot
\begin{bmatrix}
1 & 0 & 0 & \cdots & 0 \\
1 & 1 & 0 & \cdots & 0 \\
1 & 1 & 1 & \cdots & 0 \\
\vdots & \vdots & \vdots & \ddots & 0 \\
1 & 1 & 1 & \cdots & 1
\end{bmatrix}, \qquad
\boldsymbol{\theta} = \theta_t \cdot \mathbf{1} + A \boldsymbol{\omega}.
\end{equation}

The matrix $A$ accumulates the angular velocities $\boldsymbol{\omega}$ to compute future scan angles under the first-order model $\theta_{k+1} = \theta_k + \omega_k \Delta t$. Substituting $\boldsymbol{\theta}$ into Eq.~\eqref{eq:mpc_analytic_solution} yields a fixed-point system, which can be solved iteratively or approximated using unrolled gradient descent.
Once solved, the first control value $\omega_t^*$ is applied to the system and used for backpropagation during training.

After applying $\omega_t = \omega^*_t(\phi)$ to the system, the environment returns a task-specific reward $r_t$ or critic estimate $Q(o_t, \omega_t)$. The policy parameters $\phi$ are optimized to minimize the expected critic value:
$\mathcal{L}(\phi) = Q(o_t, \omega^*(\phi))$.

Since the optimal control $\omega^*(\phi)$ is implicitly defined as the solution to the constrained optimization problem in Eq.~\eqref{eq:mpc_solve}, its dependency on the neural parameters $\phi$ is indirect, through the learned cost function $\mathcal{C}_{\text{nn}}(\cdot; \phi)$. To compute the policy gradient $\nabla_\phi \mathcal{L}(\phi)$, we apply the implicit function theorem to differentiate through the solution of the MPC layer.

Let $J(\boldsymbol{\omega}; \phi)$ denote the MPC objective over horizon $T$, and assume it is twice continuously differentiable in both $\boldsymbol{\omega}$ and $\phi$. At optimality, the control sequence $\boldsymbol{\omega}^*(\phi)$ satisfies the first-order stationarity condition as follows.
\begin{equation}
\nabla_{\boldsymbol{\omega}} J(\boldsymbol{\omega}^*(\phi); \phi) = 0.
\label{eq:kkt_condition}
\end{equation}
This defines an implicit relationship between $\boldsymbol{\omega}^*$ and $\phi$, and enables gradient propagation through the MPC optimization layer via the following result:

\begin{thm}[Policy Gradient via Differentiable MPC]
\label{thm:mpc_grad}
\;\\
Let $\boldsymbol{\omega}^*(\phi)$ be the unique minimizer of $J(\boldsymbol{\omega}; \phi)$, and let the policy loss be $\mathcal{L}(\phi) = Q(o_t, \omega_t^*(\phi))$ for a differentiable critic $Q$. Then, under standard regularity conditions, the policy gradient is:
\begin{equation}
\nabla_\phi \mathcal{L}(\phi) = \nabla_\omega Q(o_t, \omega_t^*(\phi)) \cdot \mathbf{e}_1^\top \cdot \left( -\left( \nabla^2_{\boldsymbol{\omega} \boldsymbol{\omega}} J \right)^{-1} \cdot \nabla^2_{\phi \boldsymbol{\omega}} J \right),
\end{equation}
where $\mathbf{e}_1$ selects the first entry of the optimal control sequence.
\end{thm}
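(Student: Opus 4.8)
The plan is to treat this as a standard exercise in differentiating through a smooth, locally strongly convex $\arg\min$: establish that the stationarity condition of Eq.~\eqref{eq:kkt_condition} implicitly defines $\boldsymbol{\omega}^*$ as a differentiable function of $\phi$, extract the sensitivity $\partial\boldsymbol{\omega}^*/\partial\phi$ in closed form via the implicit function theorem, and then propagate through the selection vector $\mathbf{e}_1$ and the critic $Q$ by the chain rule.

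First I would make the $\phi$-dependence explicit. Because $\boldsymbol{\theta} = \theta_t\mathbf{1} + A\boldsymbol{\omega}$ is affine in $\boldsymbol{\omega}$, the objective $J(\boldsymbol{\omega};\phi) = \sum_k \bigl( \mathcal{C}_{\text{unc}}(\theta_k) + \mathcal{C}_{\text{nn}}(\theta_k;\phi) + \rho\|\omega_k\|^2 \bigr)$ inherits twice-continuous differentiability in $(\boldsymbol{\omega},\phi)$ from the assumed smoothness of the cost terms (a neural cost map with smooth activations, plus the analytic uncertainty cost), with $\nabla_{\boldsymbol{\omega}} J = A^\top \nabla_{\boldsymbol{\theta}} \mathcal{C} + 2\rho\,\boldsymbol{\omega}$ and $\nabla^2_{\boldsymbol{\omega}\boldsymbol{\omega}} J = A^\top \nabla^2_{\boldsymbol{\theta}\boldsymbol{\theta}} \mathcal{C}\, A + 2\rho I$. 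The key regularity fact is that $\nabla^2_{\boldsymbol{\omega}\boldsymbol{\omega}} J$ is nonsingular at the optimum: the quadratic control penalty contributes $2\rho I \succ 0$, which together with the second-order sufficiency implied by $\boldsymbol{\omega}^*$ being the unique minimizer makes the Hessian positive definite — exactly the nondegeneracy hypothesis of the implicit function theorem. Differentiating the identity $\nabla_{\boldsymbol{\omega}} J(\boldsymbol{\omega}^*(\phi);\phi) = 0$, which holds on a neighborhood of the nominal parameter, with respect to $\phi$ gives $\nabla^2_{\boldsymbol{\omega}\boldsymbol{\omega}} J \cdot (\partial\boldsymbol{\omega}^*/\partial\phi) + \nabla^2_{\phi\boldsymbol{\omega}} J = 0$, hence $\partial\boldsymbol{\omega}^*/\partial\phi = -\bigl(\nabla^2_{\boldsymbol{\omega}\boldsymbol{\omega}} J\bigr)^{-1} \nabla^2_{\phi\boldsymbol{\omega}} J$, with all derivatives evaluated at $(\boldsymbol{\omega}^*(\phi),\phi)$.

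The remaining step is routine: writing $\omega_t^*(\phi) = \mathbf{e}_1^\top \boldsymbol{\omega}^*(\phi)$ and noting that $o_t$ does not depend on $\phi$ in this single-step analysis, the chain rule on $\mathcal{L}(\phi) = Q(o_t,\omega_t^*(\phi))$ yields $\nabla_\phi \mathcal{L}(\phi) = \nabla_\omega Q(o_t,\omega_t^*(\phi)) \cdot \mathbf{e}_1^\top \cdot (\partial\boldsymbol{\omega}^*/\partial\phi)$, and substituting the sensitivity expression from the previous step gives precisely the claimed formula.

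I expect the main obstacle to be the rigorous handling of the regularity conditions rather than the algebra: justifying invertibility of $\nabla^2_{\boldsymbol{\omega}\boldsymbol{\omega}} J$ (which needs the cost curvature to be dominated by $\rho$, or the costs to be convex along the feasible affine subspace spanned by $A$), and the local-versus-global subtlety — the implicit function theorem only produces a locally unique smooth branch $\boldsymbol{\omega}^*(\phi)$, so the ``unique minimizer'' hypothesis must either be strengthened to strong convexity of $J$ in $\boldsymbol{\omega}$ (so that local and global minimizers coincide) or the statement read as valid in a neighborhood of the nominal $\phi$. A secondary point worth a remark is that when the MPC is solved by a fixed number of unrolled gradient steps rather than exactly (as suggested in the preceding text), the gradient should instead be taken through the unrolled iteration, which reduces to the above formula in the exact-solve limit.
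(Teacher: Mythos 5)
Your proposal follows exactly the paper's argument: apply the implicit function theorem to the stationarity condition \eqref{eq:kkt_condition} to obtain $d\boldsymbol{\omega}^*/d\phi = -(\nabla^2_{\boldsymbol{\omega}\boldsymbol{\omega}}J)^{-1}\nabla^2_{\phi\boldsymbol{\omega}}J$, then chain-rule through $\mathbf{e}_1^\top$ and the critic $Q$. Your added discussion of why the Hessian is invertible (the $2\rho I$ term) and of the local-versus-global caveat is a useful elaboration of the paper's ``standard regularity conditions,'' but the route is the same.
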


\begin{proof}
By the implicit function theorem applied to Eq.~\eqref{eq:kkt_condition}, the total derivative of $\boldsymbol{\omega}^*(\phi)$ is given by:
\begin{equation}
\frac{d \boldsymbol{\omega}^*}{d \phi} = - \left( \nabla^2_{\boldsymbol{\omega} \boldsymbol{\omega}} J \right)^{-1} \cdot \nabla^2_{\phi \boldsymbol{\omega}} J.
\end{equation}
Applying the chain rule to $\mathcal{L}(\phi) = Q(o_t, \omega_t^*(\phi))$ and noting that $\omega_t^*(\phi)$ is the first element of $\boldsymbol{\omega}^*(\phi)$, we obtain:
\begin{align}
\nabla_\phi \mathcal{L}(\phi) 
&= \nabla_{\omega} Q(o_t, \omega_t^*(\phi)) \cdot \frac{d \omega_t^*}{d \phi} \nonumber \\
&= \nabla_{\omega} Q(o_t, \omega_t^*(\phi)) \cdot \mathbf{e}_1^\top \cdot \frac{d \boldsymbol{\omega}^*}{d \phi}.
\end{align}
Substituting the expression for the Jacobian completes the proof.
\end{proof}

To realize the hybrid RL-MPC framework described above, we now detail its two complementary components: an analytical uncertainty model for exploitation-aware viewpoint selection (Section \ref{sec:cost_map_exploitation}) and a reinforcement learning-based cost map for promoting exploration (Section \ref{sec:cost_map_exploration}).

\subsection{Analytical Pose Uncertainty Modeling for Exploitation}\label{sec:cost_map_exploitation}

\begin{figure}
    \centering
    \includegraphics[width=\linewidth]{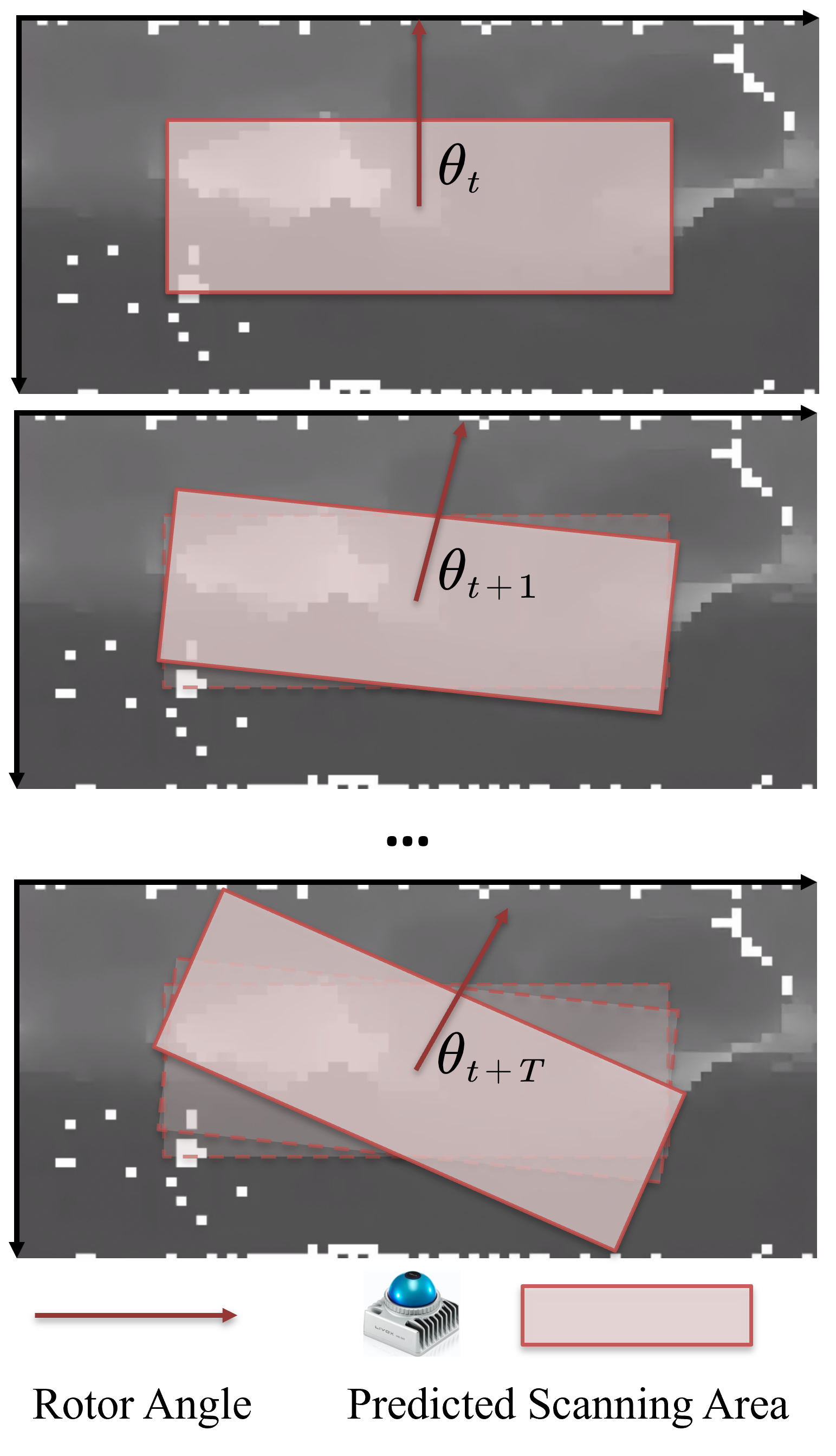}
    \caption{Scanning prediction in a control horizon using the unified panoramic depth map.}
    \label{fig:scanning_prediction}
\end{figure}

To guide AEOS tracking of existing scanned geometry-rich regions that improve localization accuracy, we incorporate an analytical uncertainty cost into the MPC objective. This cost is derived from a predictive model of future pose uncertainty, using surrogate observability optimization adapted from UA-MPC~\citep{li2025ua}.

\subsubsection{Uncertainty Prediction via Predictive LiDAR Observability}

At each planning step $k$, we estimate the expected impact of the LiDAR scan direction $\theta_k$ on future pose uncertainty. Let $\Sigma_k \in \mathbb{R}^{6 \times 6}$ denote the predicted covariance of the LIO state at time $t_k$. To approximate $\Sigma_k$, we raycast a virtual LiDAR scan in direction $\theta_k$ using the unified panoramic depth representation of the local map $P_t$ introduced in Section~\ref{sec:panoramic_representation}. This representation allows us to efficiently extract synthetic point clouds $\mathcal{R}(\theta_k)$ by cropping a directional window aligned with $\theta_k$ according to the LiDAR's field of view (FoV) as shown in Fig. \ref{fig:scanning_prediction}. The rendered points are then used to evaluate the observability of the scan direction.

For each rendered point $\mathbf{p}_j$ in the LiDAR frame, we compute its corresponding 3D location $\mathbf{P}_j$ in the world frame and the associated surface normal $\mathbf{n}_j$. Following the point-to-plane formulation used in LiDAR odometry, we define the residual of the $j$-th point as:
\begin{equation}
\epsilon_j = \mathbf{n}_j^\top \left( \hat{\mathbf{R}}^W_B(t_k) \mathbf{p}_j + \hat{\mathbf{r}}^W_B (t_k) - \mathbf{P}_j \right),
\label{eq:aeos_residual}
\end{equation}
where $\hat{\mathbf{R}}^W_B(t_k)$ and $\hat{\mathbf{r}}^W_B(t_k)$ denote the estimated rotation and translation of the UAV body frame with respect to the world frame at time $t_k$. The Jacobian of the residual with respect to the 6-DoF pose is approximated as:
\begin{equation}
\begin{aligned}
\mathbf{J}_k &= [(\partial\epsilon_k /\partial \hat{\mathbf{R}}^W_{B}(t_k))^\top,(\partial\epsilon_k /\partial \hat{\mathbf{r}}^W_{B}(t_k))^\top]^\top \\
&= [[(\hat{\mathbf{R}}^W_{B}(t_k)\mathbf{p}_k]_{\times}\mathbf{n}_k)^\top,\mathbf{n}^\top_k]^\top. \label{eq:aeos_jacobian}
\end{aligned}
\end{equation}
where $[\cdot]_{\times}$ denotes the standard skew-symmetric operator mapping $\mathbb{R}^3 \to \mathbb{R}^{3 \times 3}$.

According to A-optimal design theory~\cite{pukelsheim2006optimal}, the expected localization uncertainty under a candidate LiDAR scan angle $\theta_k$ is quantified by the trace of the inverse information matrix:

\begin{subequations}
\begin{align}
U(\theta_k) &= \mathrm{tr} \left( \boldsymbol{\Lambda}_k^{-1} \right), \label{eq:aoptimal_trace} \\
\boldsymbol{\Lambda}_k &= \sum_{j=1}^{M} \mathbf{J}_j \mathbf{J}_j^\top,
\end{align}
\end{subequations}
where $U(\theta_k)$ is the expected localization uncertainty under a candidate LiDAR scan angle $\theta_k$. $\boldsymbol{\Lambda}_k$ is the surrogate Fisher information matrix constructed from the rendered LiDAR returns $\{\mathbf{p}_j\}_{j=1}^{M}$ at scan angle $\theta_k$, and $\mathbf{J}_j$ is the Jacobian of the residual at point $\mathbf{p}_j$ with respect to the UAV pose as defined in Eq.~\eqref{eq:aeos_jacobian}. $U(\theta_k)$ can not be directly used in the MPC objective due to two major reasons: the evaluation of $U(\theta_k)$ involves LiDAR rendering and Jacobian accumulation, which is computationally expensive; more critically, $U(\theta_k)$ is a non-differentiable function with respect to $\theta_k$, violating the gradient consistency required for backpropagation through the differentiable MPC layer (Theorem~\ref{thm:mpc_grad}).
\subsubsection{Uncertainty Cost for Exploitation in MPC}
To address the above problem, we approximate $U(\theta_k)$ using a differentiable surrogate cost $\mathcal{C}_{\text{unc}}(\theta_k)$ constructed via piecewise linear interpolation over pre-sampled angles. This surrogate enables the uncertainty term to be incorporated into the MPC optimization as a smooth and tractable cost function.
At runtime, we sample $U(\theta)$ at discrete angles within the control domain using a fixed interval $\Delta \theta$. This yields $N = 2\pi / \Delta \theta$ precomputed uncertainty values:
\begin{equation}
\left\{ U_s = U(\theta_t + s \cdot \Delta \theta) \mid s = 0, 1, \dots, N-1 \right\}.
\label{eq:uncertainty_samples_updated}
\end{equation}
For an arbitrary scan angle $\theta_k \in [0, 2\pi)$ within the MPC planning horizon, we approximate its uncertainty as:
\begin{equation}
\mathcal{C}_{\text{unc}}(\theta_k) =
(1 - \delta) \cdot U_{\lfloor i \rfloor} + \delta \cdot U_{\lfloor i \rfloor + 1}, \label{eq:uncertainty_interp_updated}
\end{equation}
where $i = \theta_k / \Delta \theta$ and $\delta = i - \lfloor i \rfloor$ is the interpolation weight. The interpolated function $\mathcal{C}_{\text{unc}}$ is piecewise linear and hence differentiable almost everywhere. Its gradient with respect to $\theta_k$ is:
\begin{equation}
\frac{\partial \mathcal{C}_{\text{unc}}}{\partial \theta_k} =
\frac{U_{\lfloor i \rfloor + 1} - U_{\lfloor i \rfloor}}{\Delta \theta}. \label{eq:uncertainty_grad_updated}
\end{equation}

During each MPC step, we cache the sample set $\{U_s\}$ computed from the current local LiDAR map and apply Eq.~\eqref{eq:uncertainty_interp_updated} and \eqref{eq:uncertainty_grad_updated} to evaluate $\mathcal{C}_{\text{unc}}(\theta_k)$ and its gradient. This approximation allows the uncertainty-aware exploitation signal to be smoothly integrated into the optimization objective in Eq.~\eqref{eq:mpc_objective}, while enabling efficient end-to-end policy learning via differentiable MPC.

\begin{rmk}[Justification of Linear Surrogate]
To enable differentiable and efficient integration into the MPC layer, we adopt a piecewise linear surrogate $\mathcal{C}_{\text{unc}}(\theta)$ constructed from sparse evaluations of $U(\theta)$.
This approximation is well-justified by the physical properties of rotating LiDAR systems. As the LiDAR rotates smoothly over time, consecutive scan directions $\theta_k$ and $\theta_{k+1}$ yield highly overlapping point clouds. Since observability in pose estimation is governed by the geometric configuration and spatial distribution of these returns, adjacent viewpoints naturally produce similar uncertainty estimates. Consequently, the true uncertainty function $U(\theta)$ varies smoothly and slowly over $\theta$, making it well-suited for approximation by linear interpolation between sampled points.
\end{rmk}

\begin{rmk}[Why Analytical Modeling for Exploitation]
A natural alternative to the proposed uncertainty model is to use an end-to-end neural network to directly infer the exploitation cost from raw or downsampled point clouds. However, this approach is impractical for real-time UAV deployment for two key reasons.
First, the onboard LiDAR generates dense point clouds at high frequency, leading to a substantial computational burden if directly fed into deep perception networks without aggressive sampling. This overhead is incompatible with the real-time constraints and limited compute resources of aerial platforms.
Second, aggressive downsampling of the point cloud results in significant loss of geometric detail, which is critical for accurately estimating localization observability. In contrast, the proposed analytical pose uncertainty model retains the full resolution of geometric information and directly quantifies localization-relevant metrics via Jacobian-based observability analysis.
Therefore, by incorporating a physically grounded, differentiable approximation of pose uncertainty, we effectively capture exploitation-relevant structure while maintaining low computational cost and strong interpretability.
\end{rmk}

\subsection{Reinforcement Learning-Based Cost Map for Exploration} \label{sec:cost_map_exploration}
To complement the analytical uncertainty model used for exploitation, we introduce a learned cost map that promotes exploratory scanning behavior. This cost map is derived from local point cloud geometry and encodes task-agnostic priors about visibility, occlusion, and structural complexity. The cost function $\mathcal{C}_{\text{nn}}(\theta_k, \omega_k; \phi)$ is implemented as a neural network policy that takes the panoramic depth map and system states as input. 

\subsubsection{Neural Encoding and Cost Map Generation}
The neural exploration module aims to assign a scan utility cost $\mathcal{C}_{\text{nn}}(\theta_k, \omega_k; \phi)$ based on the local scene structure and system states, encouraging the LiDAR to explore informative, less-observed regions as shown in Fig.\ref{fig:network}. Instead of operating directly on dense point clouds, we use the panoramic depth map $\mathcal{D}_k \in \mathbb{R}^{H \times W}$ introduced in Section~\ref{sec:panoramic_representation} as a compact and efficient input representation (downspample to $H=80, W=40$). More specifically, we feed the system states from LIO (velocity in body frame $\mathbb{R}^{3}$, diagonal elements of positional covariance $\mathbb{R}^{3}$), current rotor angle ($\mathbb{R}^{1}$) and low-resolution panoramic depth map ($\mathbb{R}^{80\times 40}$) to the neural network, and construct the learned cost map $\mathcal{C}_{\text{nn}}(\theta_k, \omega_k; \phi)$.

\begin{figure}[]
    \centering
    \includegraphics[width=0.9\linewidth]{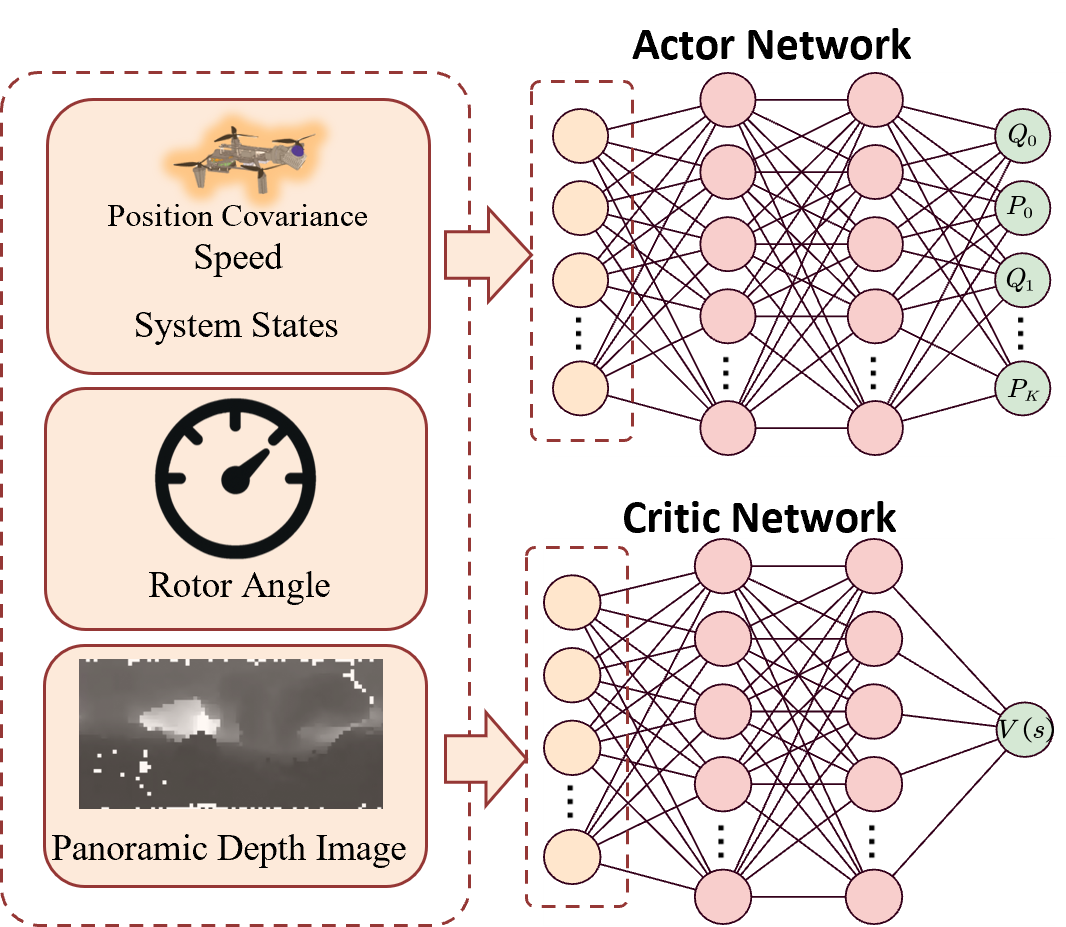}
    \caption{Light-weight neural encoding for the implicit cost map.}
    \label{fig:network}
\end{figure}

We model the learned cost map $\mathcal{C}_{\text{nn}}(\theta_k, \omega_k; \phi)$ as a structured quadratic function over the control variables $\theta_k$ and $\omega_k$. Specifically, we define:
\begin{equation}
\mathcal{C}_{\text{nn}}(\theta_k, \omega_k; \phi) =
\frac{1}{2}
\begin{bmatrix}
\theta_k \\
\omega_k
\end{bmatrix}^\top
\mathbf{Q}_k(\phi)
\begin{bmatrix}
\theta_k \\
\omega_k
\end{bmatrix}
+
\mathbf{q}_k^\top(\phi)
\begin{bmatrix}
\theta_k \\
\omega_k
\end{bmatrix},
\label{eq:cnn_cost_parametric}
\end{equation}
where, $\mathbf{Q}_k(\phi) \in \mathbb{R}^{2 \times 2}$ is a symmetric matrix predicted by the neural network. $\mathbf{q}_k(\phi) \in \mathbb{R}^{2}$ is a linear coefficient vector. Both of $\mathbf{Q}_k(\phi)$ and  $\mathbf{q}_k(\phi)$ are conditioned on the unified depth map $\mathcal{D}_k$. The network $f_\phi$ is shared across all horizon steps and produces both components:
\begin{equation}
\left( \mathbf{Q}_k, \mathbf{q}_k \right) = f_\phi\left( \mathcal{D}_k \right).
\end{equation}

\begin{rmk}[Convexity of Neural Cost Map]
To ensure both computational efficiency and convexity in the neural exploration cost $\mathcal{C}_{\text{nn}}(\theta_k, \omega_k; \phi)$, we parameterize the cost as a quadratic function over the control variables $\begin{bmatrix} \theta_k & \omega_k \end{bmatrix}^\top$ using learnable matrices $(\mathbf{Q}_k, \mathbf{q}_k)$ as defined in Eq.~\eqref{eq:cnn_cost_parametric}.
We enforce convexity by restricting $\mathbf{Q}_k(\phi)$ to be positive semi-definite. To reduce the dimensionality of the learnable parameter space and ensure efficient optimization, we constrain $\mathbf{Q}_k$ to be diagonal:
\begin{equation}
\mathbf{Q}_k(\phi) = \mathrm{diag}(q_{\theta,k},\ q_{\omega,k}),
\end{equation}
where $q_{\theta,k}$ and $q_{\omega,k}$ are scalar values output by the neural network. To ensure non-negativity and numerical stability, we apply a scaled sigmoid activation to bound the outputs:
\begin{equation}
q_{\cdot,k} = q_{\min} + (q_{\max} - q_{\min}) \cdot \mathrm{sigmoid}(\hat{q}_{\cdot,k}),
\end{equation}
with $q_{\min} = 0.1$ and $q_{\max} = 10^5$. $\hat{q}_{\cdot,k}$ is the output of the multi-layer perception network.

Similarly, the linear coefficients $\mathbf{q}_k(\phi) \in \mathbb{R}^2$ are also bounded via sigmoid scaling to avoid overly aggressive exploration. The final neural cost map consists of two hidden layers of width 256 with ReLU activations, and a sigmoid output layer to produce bounded $(\mathbf{Q}_k, \mathbf{q}_k)$ for each timestep.
This design guarantees that the resulting cost $\mathcal{C}_{\text{nn}}$ is convex with respect to control inputs, while allowing the network to learn task-dependent exploratory preferences under gradient-based training.
\end{rmk}

This formulation allows the neural network to capture high-level environmental priors such as occlusion, feature richness, and spatial coverage, while preserving compatibility with MPC's optimization structure. The quadratic form enables efficient gradient computation and stable integration into the differentiable control pipeline.

\subsubsection{Reward Functions}
The neural cost map policy $f_\phi$ is trained via reinforcement learning, where the reward function jointly incentivizes effective exploration and robust localization. At each timestep $t$, the total reward is defined as:
\begin{equation}
r_t = \lambda_{\text{exp}} \cdot r_{\text{exp}}(t) + \lambda_{\text{lio}} \cdot r_{\text{lio}}(t),
\end{equation}
where $r_{\text{exp}}(t)$ encourages the discovery of novel regions, and $r_{\text{lio}}(t)$ promotes trajectory consistency as estimated by the LIO backend. The weighting terms $\lambda_{\text{exp}}$ and $\lambda_{\text{lio}}$ control the trade-off between exploration and localization.

To measure exploration gain, we define $r_{\text{exp}}(t)$ as the ratio of newly observed voxels to the total visible voxels in the current LiDAR scan:
\begin{equation}
r_{\text{exp}}(t) = \frac{|\mathcal{V}_t^{\text{new}}|}{|\mathcal{V}_t|},
\end{equation}
where $\mathcal{V}_t$ denotes all voxels observed at time $t$, and $\mathcal{V}_t^{\text{new}}$ represents the subset not previously seen in the accumulated map. This term encourages the policy to steer the sensor toward previously occluded or unobserved areas.

Localization fidelity is captured through the reward term:
\begin{equation}
r_{\text{lio}}(t) = 1/\mathrm{RTE}(t, \tau),
\end{equation}
where $\mathrm{RTE}(t, \tau)$ denotes the average relative translational error over a sliding window of duration $\tau$, computed between the ground truth and the estimated trajectory from the LIO system. This term provides a dense, continuous signal that directly reflects the impact of viewpoint selection on downstream pose estimation accuracy.

\section{Point Cloud-Based Scanning Control Simulation and Training}\label{sec:simulation}

To train and evaluate the AEOS control policy in a reproducible and safe manner, we develop a lightweight point cloud-based simulation framework built upon ROS. As shown in Fig.~\ref{fig:simulation_env}, the simulation emulates LiDAR scanning, vehicle motion, and real-time control signals, using real-world datasets as ground truth.

\begin{figure}[]
    \centering
    \includegraphics[width=0.9\linewidth]{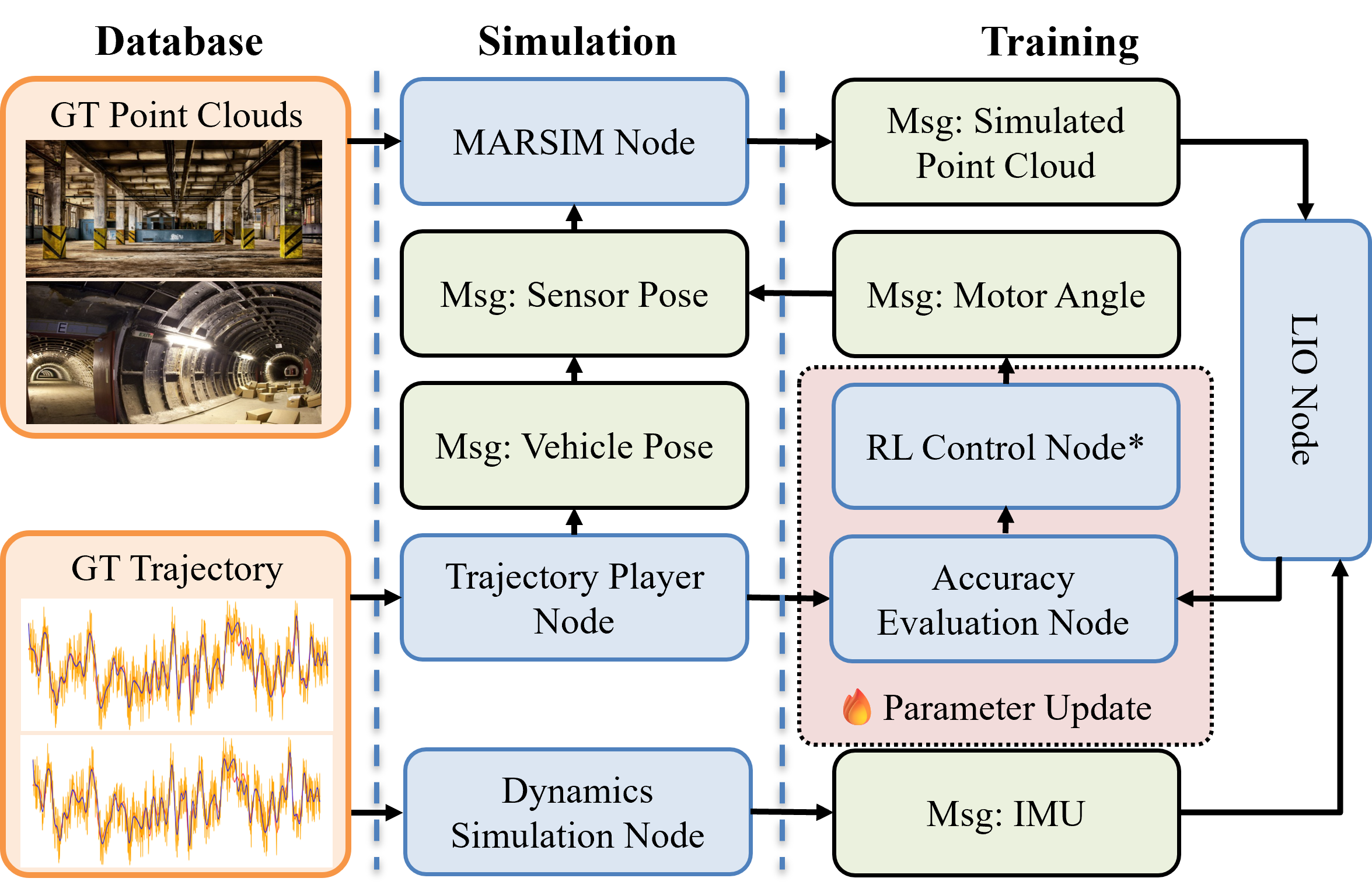}
    \caption{System architecture of the point cloud-based simulation environment. Ground-truth data are used to simulate LiDAR observations and UAV dynamics, enabling reinforcement learning with real-time feedback from the LIO backend.}
    \label{fig:simulation_env}
\end{figure}

\subsection{Simulation Architecture}

The simulation begins with a real-world database consisting of globally registered LiDAR point clouds and time-synchronized ground-truth trajectories. A \textit{trajectory player node} streams poses into the \textit{dynamics simulation node}, which introduces control delay and UAV motion constraints. This ensures the UAV follows realistic trajectories when executing control commands.

Given the simulated UAV pose and motorized angle $\theta_t$, the \textit{MARSIM node} raycasts virtual LiDAR scans from the global point cloud, producing a synthetic scan consistent with LiDAR field-of-view and occlusion geometry. The resulting point cloud is published to both the LIO backend (e.g., Fast-LIO2~\cite{xu2022fast}) and the control module.

The \textit{RL control node} receives observations and outputs the angular velocity command $\omega_t$, which updates the scan direction. Meanwhile, an \textit{accuracy evaluation node} computes rewards for both exploration and localization, using either pose error (via ground-truth) or estimation uncertainty (from the LIO output). This reward is sent back for policy learning via parameter updates.

This modular architecture allows fast prototyping of control strategies in diverse environments, while faithfully preserving the LiDAR observability structure and feedback loop seen in real-world systems.

\subsection{Simulation Dataset}

\begin{figure}[]
    \centering
    \includegraphics[width=0.9\linewidth]{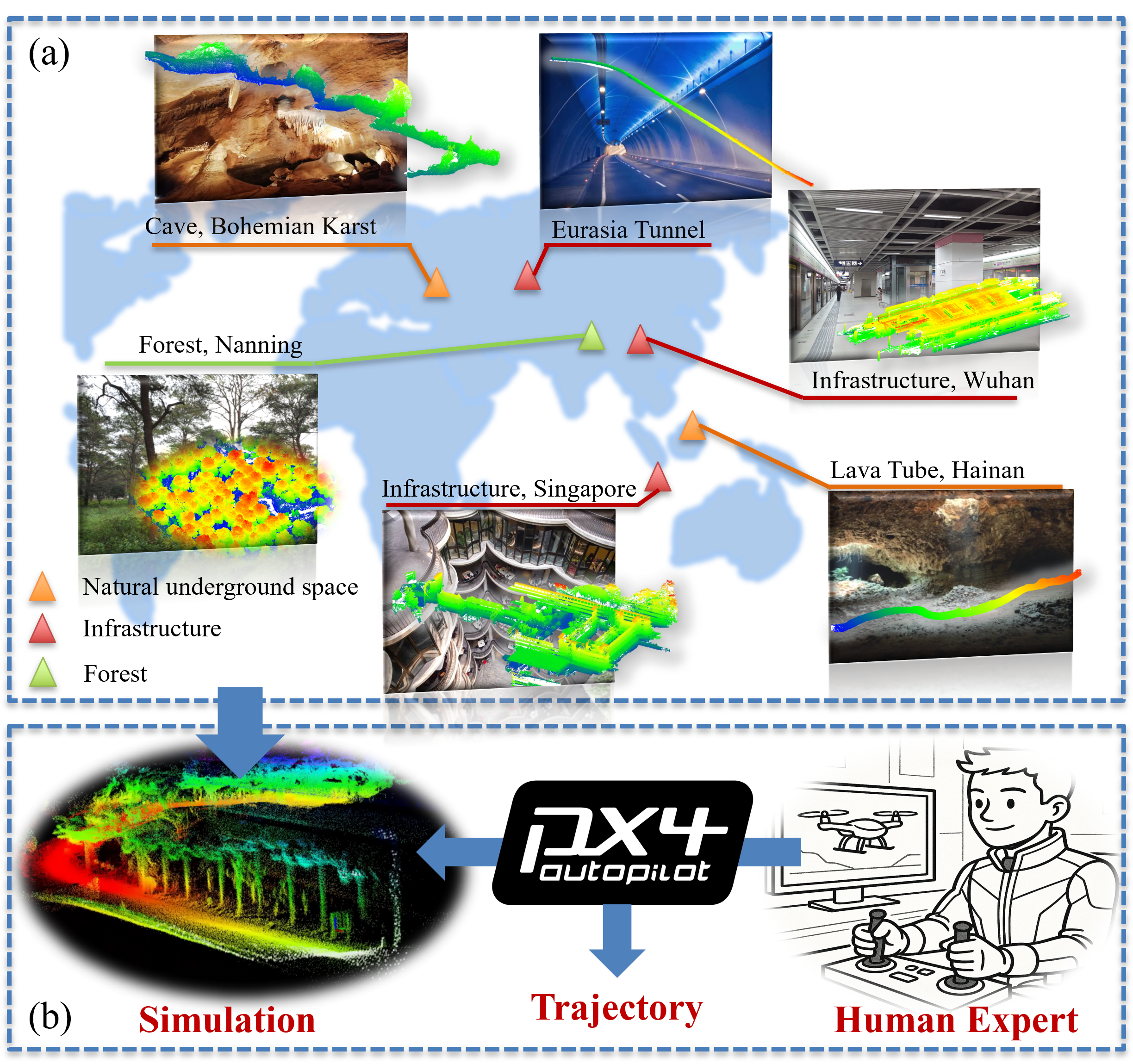}
    \caption{Overview of the simulation dataset used for active scanning. (a) High-fidelity point cloud maps of diverse real-world environments, including caves, tunnels, subways, forests, and lava tubes across the globe. (b) The simulation framework integrates the PX4 autopilot, enabling human experts to control the UAV and record realistic trajectories in complex environments.}
    \label{fig:datasets}
\end{figure}

To enable simulation-driven training, we construct a synthetic dataset based on high-resolution 3D scans of large-scale, complex environments from various locations worldwide, as shown in Fig.~\ref{fig:datasets}. Each dataset instance includes a dense global point cloud map $\mathcal{M}_{\text{GT}}$ and a ground-truth trajectory $\{\mathbf{T}_t^W\}$, collected via human-expert control in a PX4-based simulation environment.

The selected scenes represent typical domains where UAV-based exploration is essential, such as subterranean infrastructure, dense vegetation, and enclosed industrial corridors. These environments are not only operationally relevant for autonomous missions (e.g., search and rescue, infrastructure inspection, and remote monitoring), but also present substantial challenges to SLAM systems, due to severe occlusion, limited structural features, and narrow spaces.

The dataset spans geometrically diverse and feature-rich scenes designed to evaluate viewpoint planning under a wide range of sensing difficulties. Detailed statistics of all scenes are listed in Table~\ref{tab:dataset_stats}.

\begin{table*}[t]
    \centering
    \caption{Description of the simulation dataset.}
    \label{tab:dataset_stats}
    \begin{adjustbox}{width=\linewidth}
    \begin{tabular}{lcccccc}
        \toprule
        \textbf{Sequence} & \textbf{Scene} & \textbf{Location} & \textbf{Environment Type}& \textbf{Average Velocity (m/s)} & \textbf{Trajectory Time (s)} & \textbf{Trajectory Length (m)} \\
        \midrule
        Simu-Seq01 & Lava Tube     & Hainan, China        & Natural underground space & 2.30 &119 & 275\\
        Simu-Seq02 &Cave          & Bohemian Karst, Czech       & Natural underground space  &  2.52 & 219 & 552   \\
        Simu-Seq03 &Eurasia Tunnel & Istanbul, Turkey    & Infrastructure        & 7.05 & 313 & 2207 \\
        Simu-Seq04 &Wuhan Tunnel        & Wuhan, China         & Infrastructure  &2.40 & 379 &158  \\
        Simu-Seq05 &Wuhan Subway        & Wuhan, China         & Infrastructure  & 2.20& 153 & 335\\
        Simu-Seq06 &Building & Singapore, Singapore  & Infrastructure  & 2.20 & 304 & 661 \\
        Simu-Seq07 &Spine & Singapore, Singapore  & Infrastructure  & 1.02  & 942 & 962 \\
        Simu-Seq08 &Forest        & Nanning, China       & Forest         & 2.67 & 414 & 1107 \\
        \bottomrule
    \end{tabular}
    \end{adjustbox}
\end{table*}

\subsection{Training Configuration}

We train our hybrid RL-MPC policy using the Proximal Policy Optimization (PPO) algorithm within the custom ROS-based simulation environment. We select only 40\% length of the trajectory for each dataset as training data. In each episode, 60 second-length is extracted from the training. Training is conducted with a total of 500,000 environment steps in each scene. The optimizer is Adam with a linearly scheduled learning rate. We apply gradient clipping with $\|\nabla\| \leq 0.5$ and enforce a KL divergence threshold of 0.01 to stabilize updates. All experiments are performed on a CPU backend.

\begin{figure*}[h!]
    \centering
    \includegraphics[width=0.9\linewidth]{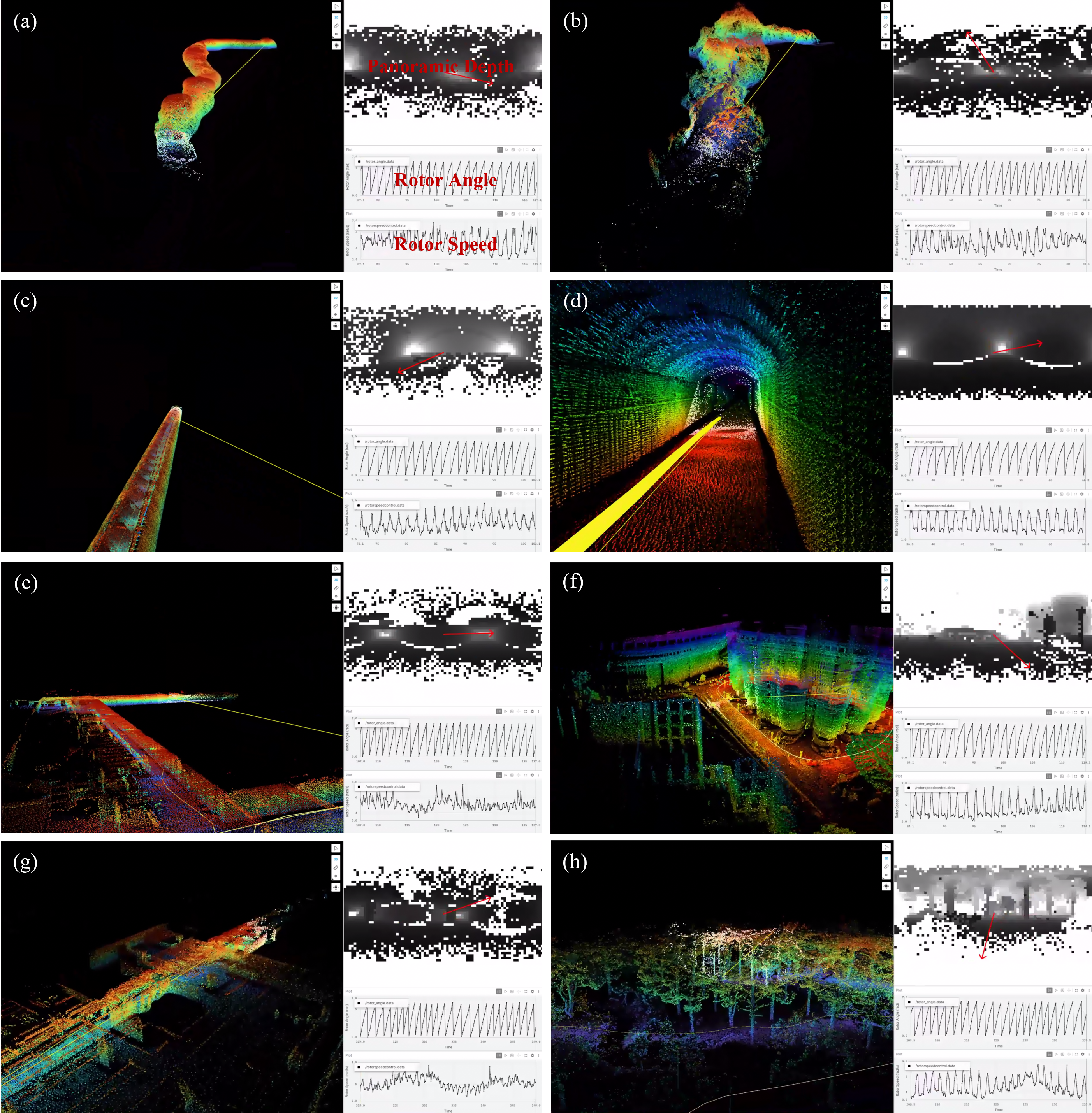}
    \caption{Simulation experiments. (a)-(g), the screenshots of the simulation environments for the sequences Simu-Seq01 to Simu-Seq08. Each figure includes the point clouds, panoramic depth, rotor angle serial, and rotor speed serial.}
    \label{fig:simu_experiments}
\end{figure*}

\begin{figure*}[]
    \centering
    \includegraphics[width=0.8\linewidth]{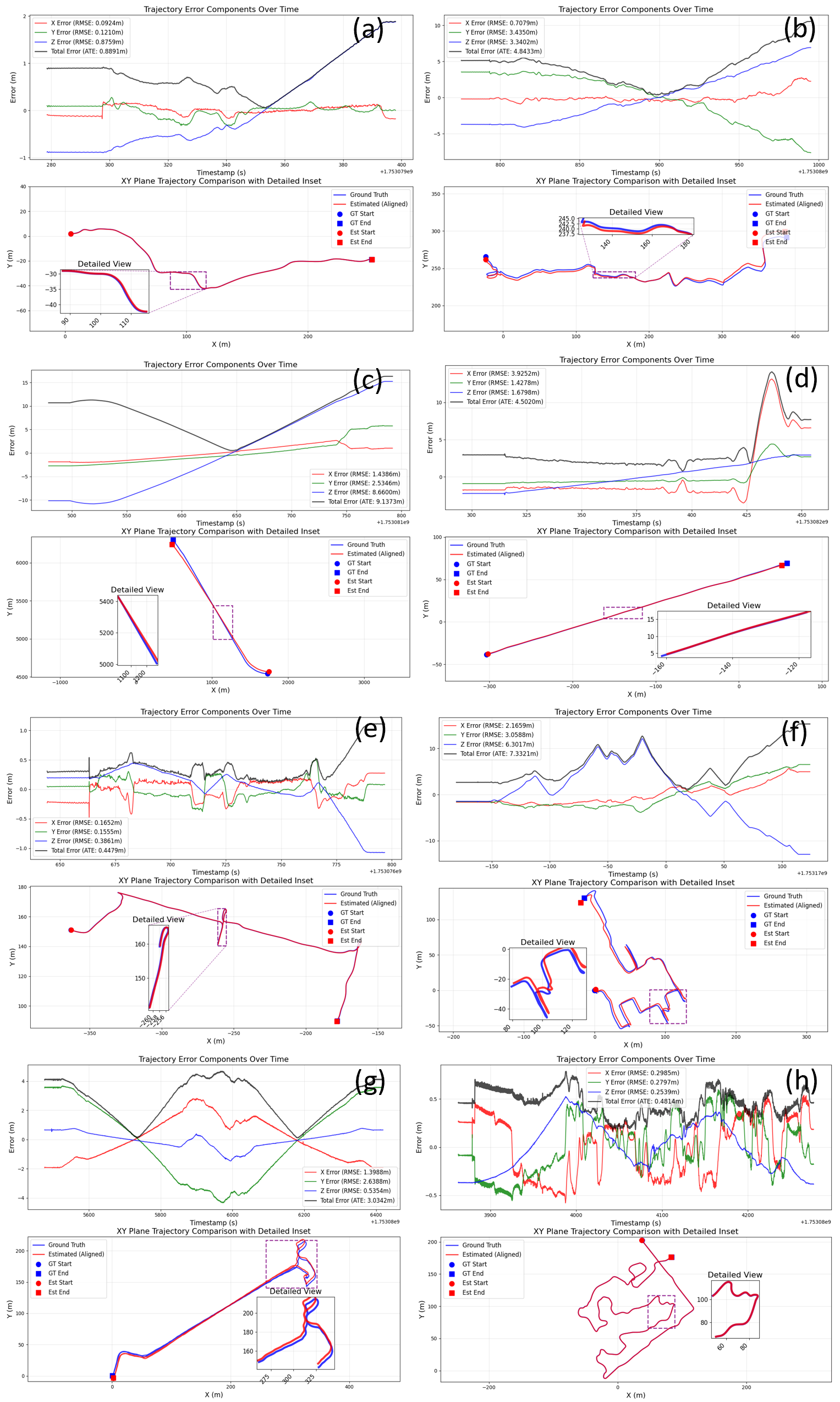}
    \caption{Simulation evaluations of the proposed AEOS. (a)-(g), the trajectories and ATE of the proposed AEOS for Simu-Seq01 to Simu-Seq08.}
    \label{fig:simu_evaluations}
\end{figure*}

\section{Experiments}\label{sec:experiments}

\subsection{Evaluation Metrics and Baselines}

To quantitatively evaluate the performance of the proposed AEOS policy, we adopt the Absolute Pose Error (APE) as the primary metric. APE measures the deviation between the estimated UAV trajectory and the ground-truth poses, providing a direct indicator of localization accuracy.
We compare AEOS against five representative baselines:

(1-2) \textbf{Fixed-Rate Scanning (Slow/Fast):} Two constant-speed motorized LiDAR configurations, rotating at 1 rad/s and 8 rad/s respectively. These emulate common non-adaptive scanning setups in UAV mapping.

(3) \textbf{Optimization-Based MPC:} A model predictive controller that relies solely on analytically computed uncertainty, without any learned exploration cost.

(4) \textbf{Random Control:} Angular velocities are randomly sampled from a uniform distribution within the control bounds, serving as a lower-bound reference.

(5) \textbf{AEOS w/o $\mathcal{C}_{\text{unc}}$}: A variant of AEOS with the uncertainty cost $\mathcal{C}_{\text{unc}}(\theta_k)$ disabled, isolating the contribution of the learned neural exploration component.

All methods are executed under identical simulation conditions, using the same trajectory references, LiDAR sensor model, and environment configurations. APE is computed for each scene independently, and average results are reported to assess overall generalization.

\subsection{Evaluation in Simulation}

Table~\ref{tab:sim_ape_results} reports the Absolute Pose Error (APE) across eight simulated sequences, benchmarking various scanning control strategies. Our proposed method, AEOS, achieves the best performance in all test environments, with an average APE of $3.83 m$, significantly lower than both classical baselines and optimization-based controllers.

Fixed-rate strategies suffer from a trade-off between stability and coverage: slow scanning results in limited observability, while fast scanning induces unstable matching, especially in geometrically constrained settings such as tunnels and caves. Random control performs even worse due to a lack of temporal consistency.

The optimization-based MPC baseline provides more adaptive behavior by solving trajectory-level objectives, yielding improved accuracy in most scenes. However, it lacks the learning-based generalization capability, especially in structurally diverse environments like Seq03 (Eurasia Tunnel), where AEOS achieves a two-times improvement.

Ablation analysis further validates the benefit of including uncertainty-awareness: AEOS without $\mathcal{C}_{\text{unc}}$ leads to consistent degradation across sequences, underscoring the role of active perception in navigating occluded or texture-sparse regions. Overall, the hybrid cost formulation and learning-based planning enable AEOS to robustly optimize viewpoint trajectories for localization fidelity across complex and varied scenes.

\begin{table*}[t]
    \centering
    \caption{APE (in meters) of different scanning control methods across simulated scenes. Lower is better. Bold indicates best performance. Underline indicates second-best.}
    \label{tab:sim_ape_results}
    \begin{adjustbox}{width=\linewidth}
    \begin{tabular}{lccccccc}
        \toprule
        \textbf{Sequence} & \textbf{Scene} & \textbf{Fixed-Rate (Slow)} & \textbf{Fixed-Rate (Fast)} & \textbf{Random Control} & \textbf{Optimization-Based MPC} & \textbf{AEOS w/o $\mathcal{C}_{\text{unc}}$} & \textbf{AEOS (Ours)} \\
        \midrule
        Simu-Seq01 & Lava Tube       & 2.43 & 2.01 & 1.87 & \underline{1.12} & 1.39 & \textbf{0.89} \\
        Simu-Seq02 & Cave            & 7.33 & 6.78 & 6.41 & 5.37 & \underline{5.09} & \textbf{4.84} \\
        Simu-Seq03 & Eurasia Tunnel  & 120.02 & 101.53 & 130.21 & \underline{20.83} & 27.12 & \textbf{9.13} \\
        Simu-Seq04 & Wuhan Tunnel    & 7.19 & 6.92 & 6.24 & \underline{4.89} & 5.12 & \textbf{4.50} \\
        Simu-Seq05 & Wuhan Subway    & 2.11 & 1.66 & 1.38 & 0.88 & \underline{0.63} & \textbf{0.45} \\
        Simu-Seq06 & Building        & 9.72 & 9.01 & 8.56 & \underline{7.88} & 8.01 & \textbf{7.33} \\
        Simu-Seq07 & Spine           & 5.02 & 4.53 & 4.02 & 3.28 & \underline{3.11} & \textbf{3.03} \\
        Simu-Seq08 & Forest          & 1.61 & 1.33 & 1.07 & \underline{0.62} & 0.74 & \textbf{0.48} \\
        \midrule
        \textbf{Average} & --   & 19.93 & 16.72 & 22.00 & \underline{5.73} & 6.39 & \textbf{3.83} \\
        \bottomrule
    \end{tabular}
    \end{adjustbox}
\end{table*}

\subsection{Evaluation in Real-world Environments}

\begin{figure*}[h]
    \centering
    \includegraphics[width=\linewidth]{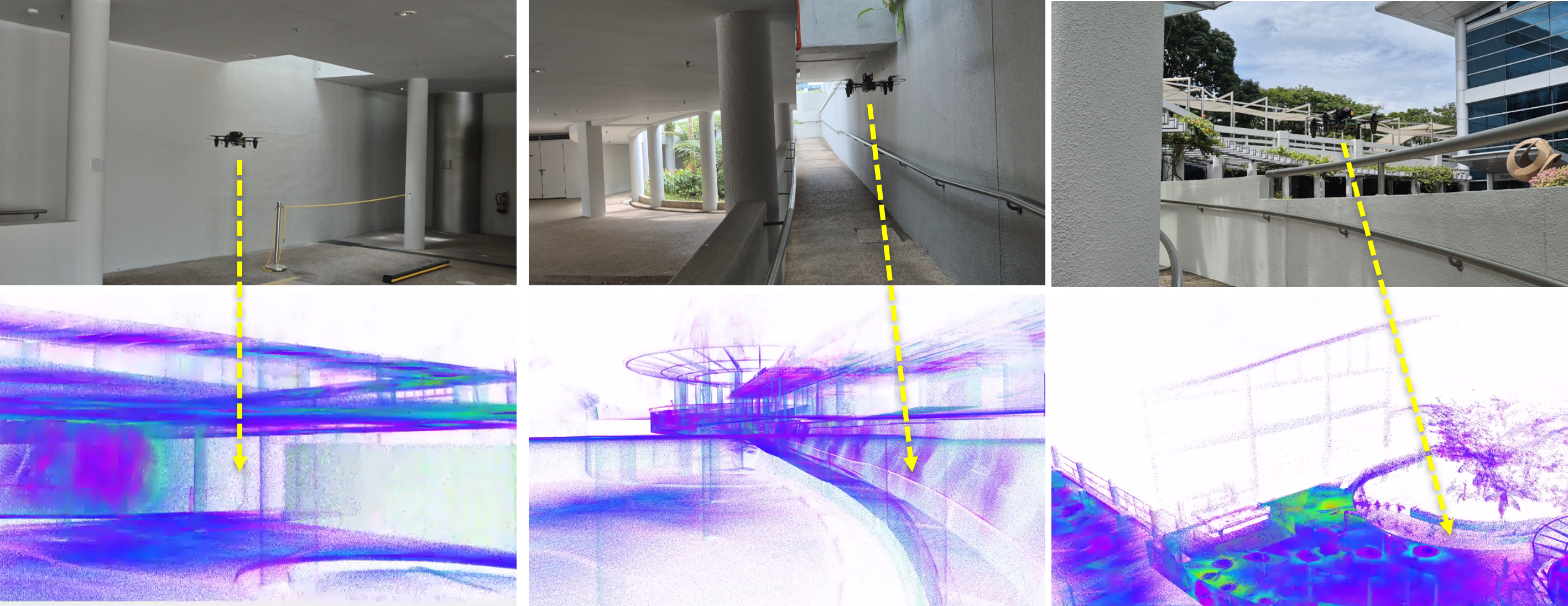}
    \caption{Real-world evaluation of the proposed AEOS in real-seq1.}
    \label{fig:real-seq1}
\end{figure*}

\begin{figure*}[h]
    \centering
    \includegraphics[width=\linewidth]{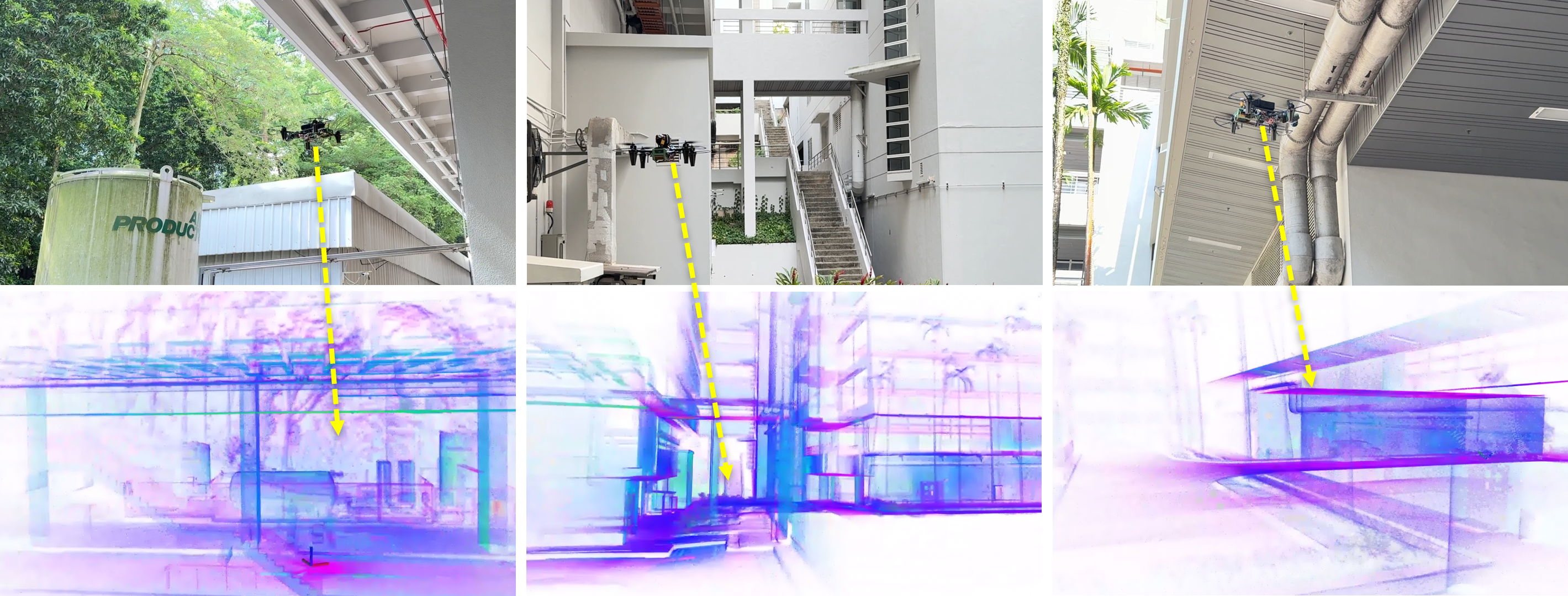}
    \caption{Real-world evaluation of the proposed AEOS in real-seq2.}
    \label{fig:real-seq2}
\end{figure*}

\begin{figure*}[]
    \centering
    \includegraphics[width=\linewidth]{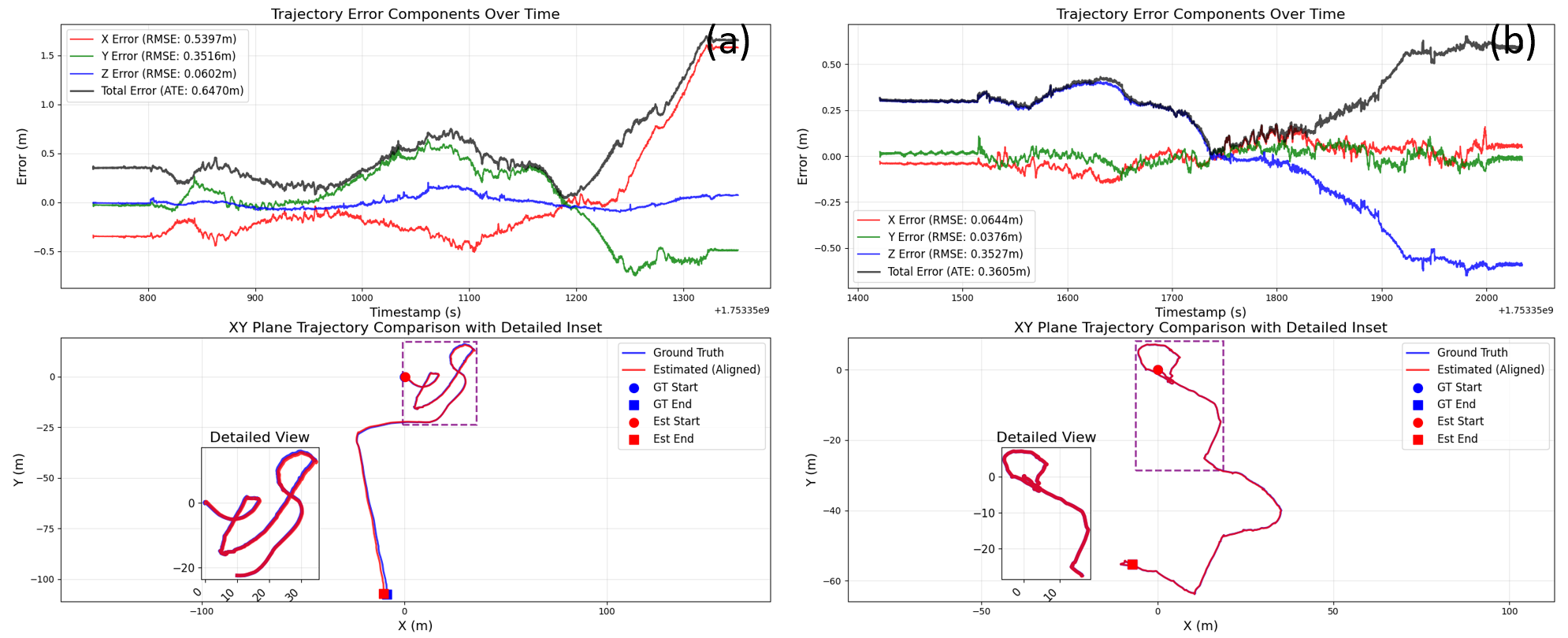}
    \caption{Real-world evaluation.(a) and (b) the trajectories and ATE of the proposed AEOS for Real-Seq01 to Real-Seq02.}
    \label{fig:real_evaluation}
\end{figure*}

\begin{table*}[t]
    \centering
    \caption{Description of the real-world evaluation.}
    \label{tab:real_dataset_stats}
    \begin{adjustbox}{width=\linewidth}
    \begin{tabular}{lcccccc}
        \toprule
        \textbf{Sequence} & \textbf{Scene} & \textbf{Location} & \textbf{Environment Type}& \textbf{Average Velocity (m/s)} & \textbf{Trajectory Time (s)} & \textbf{Trajectory Length (m)} \\
        \midrule
        Real-Seq01 & Industrial Equipment     & Singapore        & Infrastructure & 0.32 & 613 & 194\\
        Real-Seq02 & Basement          & Singapore       & Infrastructure  &  0.48 & 602 & 287   \\
        \bottomrule
    \end{tabular}
    \end{adjustbox}
\end{table*}

To assess the real-world performance of the proposed AEOS framework, we conduct comparative experiments against baseline methods in two challenging indoor environments as shown in Fig. \ref{fig:real-seq1} and Fig. \ref{fig:real-seq2}. To ensure both safety and experimental fairness, we first acquire a high-resolution prior map of each scene using a survey-grade LiDAR scanner. This prior map serves as a reference for accurate LiDAR-to-map registration, enabling robust pose estimation and preventing potential collisions caused by unstable localization during flight. Based on the reference map, we then generate a consistent set of waypoints for the UAV by human control. This ensures that all control strategies, namely, AEOS and baselines, are evaluated along identical trajectories, providing a controlled and fair comparison of scanning performance and localization accuracy. The ATE and trajectories of proposed AEOS are illustrated in Fig. \ref{fig:real_evaluation}. Then we evaluate the APE of different algorithms against the ground-truth trajectories in Table. \ref{tab:real_ape_results}.

\begin{table*}[t]
    \centering
    \caption{APE (in meters) of different scanning control methods across real-world environments. Lower is better. Bold indicates best performance. Underline indicates second-best.}
    \label{tab:real_ape_results}
    \begin{adjustbox}{width=\linewidth}
    \begin{tabular}{lccccccc}
        \toprule
        \textbf{Sequence} & \textbf{Scene} & \textbf{Fixed-Rate (Slow)} & \textbf{Fixed-Rate (Fast)} & \textbf{Random Control} & \textbf{Optimization-Based MPC} & \textbf{AEOS w/o $\mathcal{C}_{\text{unc}}$} & \textbf{AEOS (Ours)} \\
        \midrule
        Real-Seq01 & Industrial Equipment & 1.73 & 1.42 & 2.05 & \underline{0.61} & 0.68 & \textbf{0.36} \\
        Real-Seq02 & Basement             & 1.88 & 1.35 & 2.12 & \underline{0.73} & 0.79 & \textbf{0.65} \\
        \midrule
        \textbf{Average} & --              & 1.81 & 1.39 & 2.09 & \underline{0.67} & 0.74 & \textbf{0.51} \\
        \bottomrule
    \end{tabular}
    \end{adjustbox}
\end{table*}

The quantitative results are summarized in Table \ref{tab:real_ape_results}. In both test sequences, the proposed AEOS framework achieves the lowest absolute pose error (APE), consistently outperforming all baseline methods. In Real-Seq01, which involves large, unstructured industrial equipment with frequent occlusions and geometric degeneracy, AEOS achieves an APE of 0.36 m, surpassing the second-best method, Optimization-Based MPC (0.61 m), by 41\%. In Real-Seq02, conducted in a confined basement environment with narrow corridors and repeated occlusion, AEOS maintains strong performance with an APE of 0.65 m, again outperforming Optimization-Based MPC (0.73 m) and all other baselines.

The superior performance of AEOS across both environments highlights the effectiveness of its hybrid control design. While Optimization-Based MPC benefits from model-based planning with uncertainty reasoning, it lacks adaptability to local scene variations. In contrast, AEOS leverages a learned cost map to capture high-dimensional spatial cues for adaptive exploration, while still maintaining interpretability and task-awareness via analytical uncertainty modeling. This combination enables AEOS to make informed scanning decisions in both feature-rich and degenerate regions.

In addition, AEOS demonstrates strong generalization and consistency, achieving the lowest average APE (0.51 m) and the smallest performance variance across scenes. Other baselines, such as fixed-rate or random scanning, suffer from either limited coverage or unstable observability, resulting in significantly higher localization errors.
These findings confirm that AEOS enables accurate, robust, and efficient LiDAR-inertial odometry for UAVs operating in real-world environments with complex geometry, occlusions, and sensing challenges.

\subsection{Time Performance Analysis}
\begin{figure}[]
    \centering
    \includegraphics[width=\linewidth]{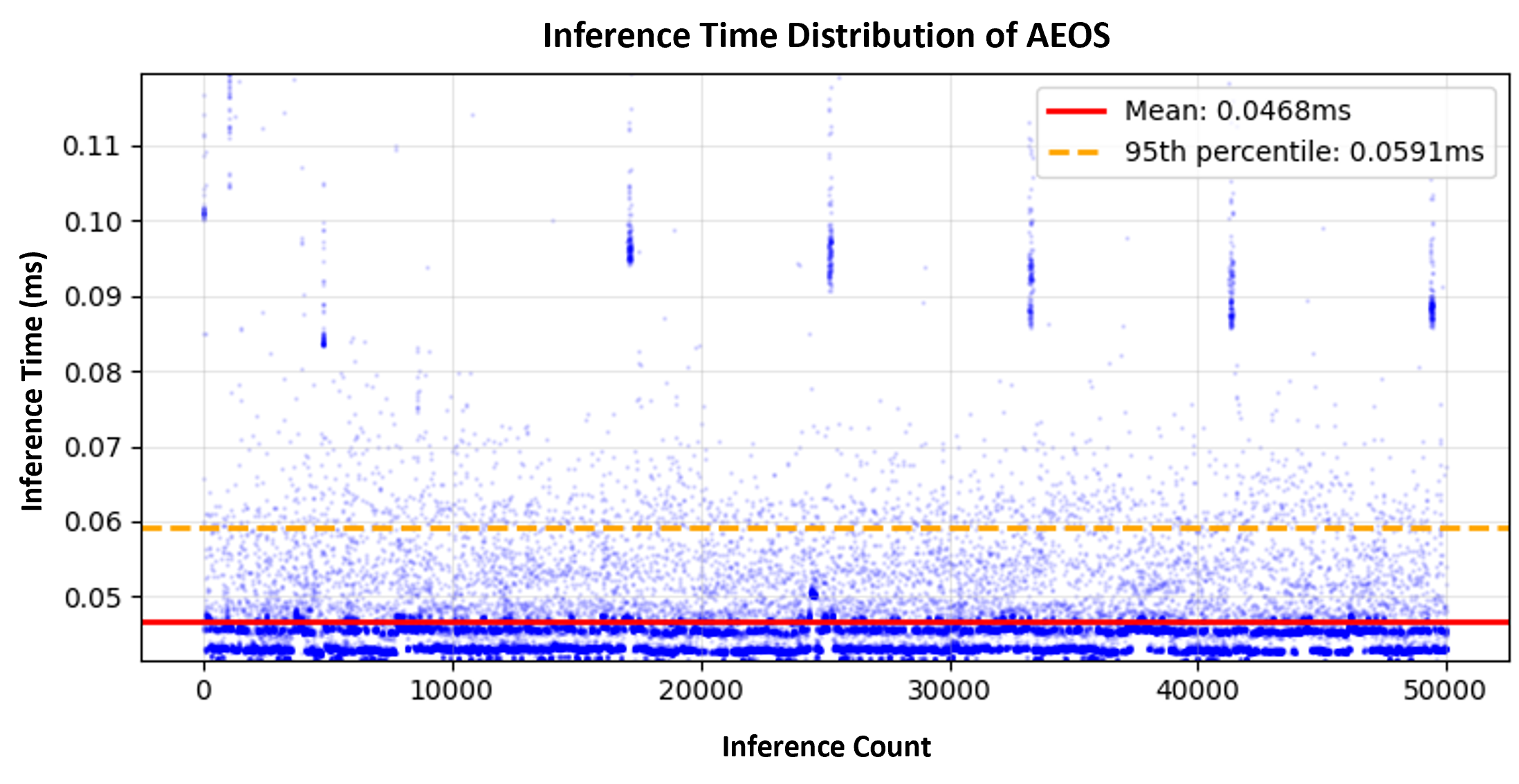}
    \caption{Inference time distribution of the proposed AEOS control policy evaluated on an x86 Intel-N305 edge computing unit. The results show that the system achieves real-time performance with low latency, suitable for onboard deployment.}
    \label{fig:time_distributions}
\end{figure}

Beyond localization accuracy, the practical deployability of AEOS also hinges on its computational efficiency for real-time onboard execution. To this end, we further evaluate the runtime performance of the control module on edge hardware.

We benchmarked the inference latency of AEOS on an x86 Intel-N305 edge computing unit. As shown in Fig. \ref{fig:time_distributions}, the average inference time is 0.047 ms, with 95\% of all inferences completed within 0.059 ms. These results confirm that AEOS meets the real-time requirements of UAV platforms, with negligible control latency. Moreover, the extremely low computational overhead ensures sufficient processing headroom for concurrent execution of other onboard tasks such as LiDAR-inertial odometry (LIO), mapping, and global planning, thereby facilitating seamless integration into embedded autonomous systems.

\section{Discussion and Future Work}\label{sec:diss}

\subsection{Interpretable Role Separation in Hybrid RL-MPC}

\begin{figure}[]
    \centering
    \includegraphics[width=\linewidth]{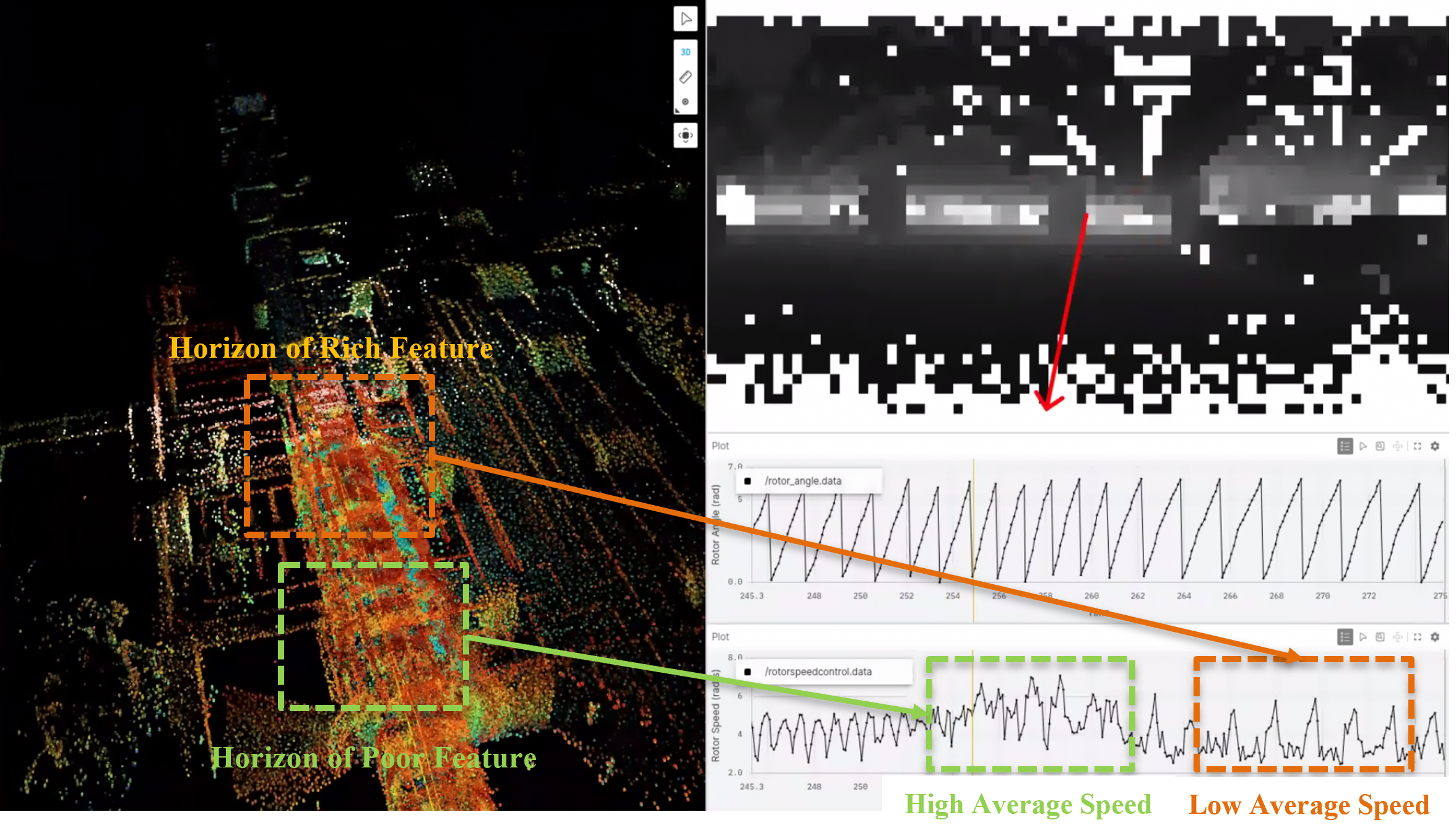}
    \caption{Temporal profile of the AEOS angular velocity output. The overall trend is guided by the RL-based learned cost (low-frequency), while high-frequency local variations are introduced by the uncertainty-aware MPC module.}
    \label{fig:interpretable}
\end{figure}

A key strength of the AEOS framework lies in its interpretable control behavior, emerging naturally from the hybrid architecture that combines reinforcement learning and model predictive control. Empirically, as illustrated in Fig.~\ref{fig:interpretable}, the angular velocity output over time exhibits a clear frequency-domain structure: a slowly varying trend reflecting long-term scanning intentions, and localized high-frequency variations that enable reactive adaptation to immediate sensing and localization needs.

This pattern corresponds to the functional separation of roles within AEOS. The learned neural cost map, trained via reinforcement learning, primarily governs the low-frequency envelope of the control signal. It modulates strategic behaviors such as whether to persist in scanning a direction or move quickly through sparse regions. In contrast, the analytical uncertainty-based MPC contributes to high-frequency corrections, adjusting the scan direction at finer time scales to improve local pose observability.

This separation is not only observed empirically but also rooted in the system's structural design. The input to the neural policy is a compressed panoramic depth map with limited spatial and temporal resolution. Such an input bandwidth inherently limits the ability of the policy to respond to fast state transitions or localized geometric detail, constraining its output to low-pass control signals that reflect coarse, scene-aware trends. Meanwhile, the MPC module operates with real-time state feedback and full-resolution geometric information, enabling precise, short-horizon refinements based on local observability.

Together, this layered control scheme achieves both adaptability and accuracy: the RL component provides scene-level scanning intent, while the MPC layer ensures responsive and precise adjustment. This interaction yields a semantically and physically interpretable behavior structure, which improves both transparency and robustness in complex real-world deployments.

\subsection{Application for Rescue in Complex Scenes}

\begin{figure}[]
\includegraphics[width=\linewidth]{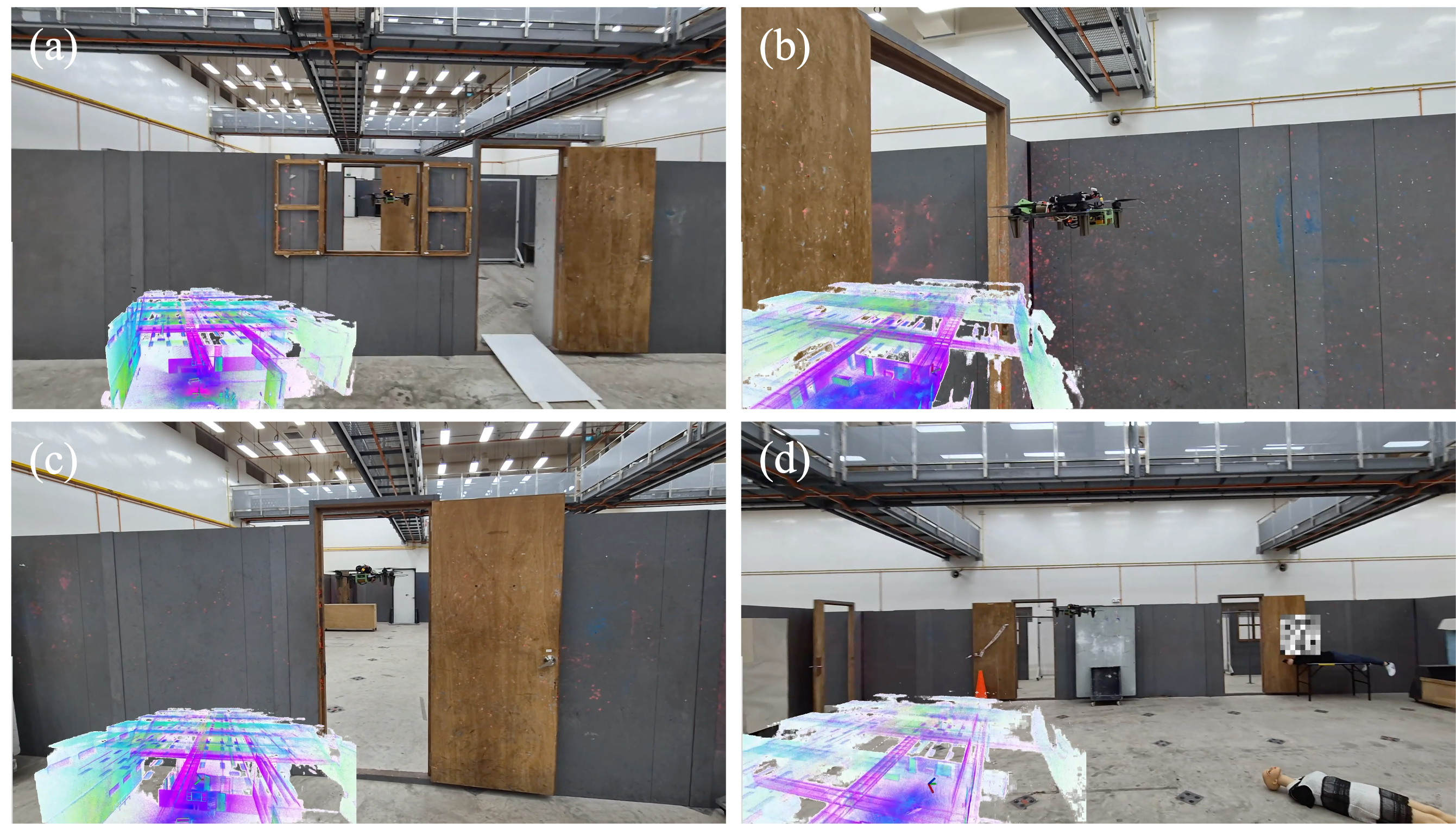}
    \captionof{figure}{Application for rescue in complex scenes. (a) Flying through the window. (b) Flying through the narrow corridor. (c) Flying through the door. (d) Quick exploration of the complex environment. The corresponding video can be found on the project page.}
    \label{fig:app_in_rescue}
\end{figure}

Benefiting from its reliable positioning and wide field of view (FoV), the proposed AEOS exhibits strong potential for rescue operations in complex environments, as illustrated in Fig.~\ref{fig:app_in_rescue}. The AEOS-drone is capable of safely traversing constrained spaces such as windows, doors, and narrow corridors, thereby demonstrating its robustness and applicability in extreme operational scenarios.

\subsection{Future Directions}
A promising future direction is to extend AEOS toward multi-agent active sensing scenarios, where multiple UAVs collaboratively perform LiDAR scanning and localization in large-scale or multi-level environments. By sharing observations and coordinating scanning directions, a team of agents can improve spatial coverage, reduce redundancy, and accelerate map acquisition. This requires the design of decentralized policies that scale with team size while respecting bandwidth, latency, and safety constraints. Incorporating inter-agent communication, coordination-aware reinforcement learning, and uncertainty-aware task allocation will be key to enabling efficient and robust multi-agent LiDAR-inertial odometry under real-world operational constraints.

\section{Conclusion}\label{sec:conclusion}

In this paper, we proposed AEOS, a biologically inspired and computationally efficient active scanning control framework for UAV-based LiDAR-inertial odometry in complex environments. AEOS integrates a lightweight motorized LiDAR system with a hybrid control strategy that fuses model predictive control (MPC) and reinforcement learning (RL). Specifically, an analytical uncertainty model guides task-aware exploitation by predicting future pose observability, while a learned neural cost map promotes exploration by capturing high-dimensional spatial cues from panoramic depth representations. This design enables scene-adaptive, interpretable, and low-latency control suitable for real-time onboard deployment on resource-constrained UAV platforms.

We developed a high-fidelity point cloud-based simulation environment to facilitate scalable training and sim-to-real transfer of the AEOS policy across diverse scenarios, including tunnels, forests, and underground infrastructure. Extensive experiments in both simulated and real-world environments demonstrate that AEOS consistently improves odometry accuracy and mapping completeness over fixed-speed, optimization-based, and purely learning-based baselines.
Future work will explore multi-agent coordination for collaborative LiDAR scanning using the proposed active scanning system.


\printcredits

\section{Declaration of interests}
The authors declare that they have no known competing financial interests or personal relationships that could have appeared to influence the work reported in this paper.

\section{Acknowledgment}
This research is supported by NTUitive Gap Fund (NGF-2025-17-006) and the National Research Foundation, Singapore, under its Medium-Sized Center for Advanced Robotics Technology Innovation (CARTIN).

\bibliographystyle{cas-model2-names}

\bibliography{refs}
\end{document}